\documentclass{article}
\usepackage[accepted]{icml2021}

\usepackage{amsfonts,  amsmath, amssymb, amsthm}
\usepackage{float}
\usepackage{xfrac}
\usepackage{graphicx, color, colortbl}
\usepackage{dsfont}
\usepackage{bbm}
\usepackage{import}
\usepackage{mathtools}
\usepackage{bm}
\usepackage[colorlinks=true, allcolors=blue]{hyperref}
\usepackage{xcolor}
\definecolor{dmorange500}{HTML}{FF5F19}
\definecolor{dmblue300}{HTML}{2267EB}
\definecolor{dmred300}{HTML}{FF617B}
\hypersetup{
	colorlinks,
	citecolor=dmblue300,
	linkcolor=dmred300,
	urlcolor=dmorange500}
\usepackage{xspace}
\usepackage{enumitem}
\setitemize{noitemsep,topsep=0pt,parsep=0pt,partopsep=0pt}





\usepackage{sty/notations}



\icmltitlerunning{Problem Dependent Thresholding Bandit Problems}

\begin{document}

\twocolumn[
\icmltitle{Problem Dependent View on Structured Thresholding Bandit Problems}

\icmlsetsymbol{equal}{*}

\begin{icmlauthorlist}
\icmlauthor{James Cheshire}{ovgu}
\icmlauthor{Pierre M\'enard}{ovgu}
\icmlauthor{Alexandra Carpentier}{ovgu}
\end{icmlauthorlist}
\icmlaffiliation{ovgu}{Otto von Guericke
University Magdeburg}

\icmlcorrespondingauthor{James Cheschire}{james.cheschire@ovgu.de}

\icmlkeywords{multi-armed bandits, combinatorial pure exploration, thresholding bandits, problem dependent, binary search, ICML}
\vskip 0.3in
]

\printAffiliationsAndNotice{}  

\begin{abstract}%

 We investigate the \textit{problem dependent regime} in the stochastic \emph{Thresholding Bandit problem} (\tbp) under several \emph{shape constraints}. In the \tbp the objective of the learner is to output, at the end of a sequential game, the set of arms whose means are above a given threshold. The vanilla, unstructured, case is already well studied in the literature. Taking $K$ as the number of arms, we consider the case where (i) the sequence of arm's means $(\mu_k)_{k=1}^K$ is monotonically increasing (\textit{MTBP}) and (ii) the case where $(\mu_k)_{k=1}^K$ is concave (\textit{CTBP}). We consider both cases in the \emph{problem dependent} regime and study the probability of error - i.e.~the probability to mis-classify at least one arm. In the fixed budget setting, we provide upper and lower bounds for the probability of error in both the concave and monotone settings, as well as associated algorithms. In both settings the bounds match in the \emph{problem dependent} regime up to universal constants in the exponential.
\end{abstract}

\section{Introduction}
\label{sec:introduction}

Stochastic multi-armed bandit problems model situations in which a learner faces multiple unknown probability distributions, or ``arms'', and has to sequentially sample these arms. 

In this paper, we focus on the Thresholding Bandit Problem (\tbp), a \textit{Combinatorial Pure Exploration (CPE)} bandit setting introduced by \citet{chen2014combinatorial}. The learner is presented with $[K] = \{1,\ldots,K\}$ arms, each following an unknown distribution $\nu_k$ with unknown mean $\mu_k$.  We focus on the \emph{fixed budget} variant of this problem. Given a budget $T>0$, the learner samples the arms sequentially for a total of $T$ times and then aims at predicting the set of arms whose mean is above a known threshold $\tau\in\R$. We will measure the learner's performance by the \emph{probability of error} - i.e. the probability that the learner mis-classifies at least one arm - and consider therefore the \emph{problem dependent regime}. 

The focus of this paper is on \emph{structured, shape constrained \tbp}. More precisely, we study the influence of some classical \textit{structures, in the form of a shape constraint} on the \textit{sequence of means of the arms}, on the \tbp problem. That is, we study how classical shape constraints influence the probability of error. A related study was performed by \citet{cheshire2020} for the problem independent (overall worst-case) regime, and we aim at extending this study to the \textit{problem dependent regime}. We will aim at finding the problem dependent quantities that have an impact on the optimal probability of error, and at providing matching upper and lower bounds. 

We will discuss three structured \tbp \hspace{-4pt}s in this paper; among those, we recall existing results of one, and provide results for two. Here is a short overview.

\vspace{-0.3cm}
\paragraph{Vanilla, unstructured case \tbp}
The vanilla, unstructured case is the simplest \tbp where we only assume that the distributions of the arms are sub-Gaussian - also related to the TOP-M\footnote{In the TOP-M setting, the objective of the learner is to output the $M$ arms with highest means. A popular version of it it is the TOP-1 or "best arm identification" problem where the aim is to find the arm that realises the maximum.} setting. The \tbp is already well studied in the literature - both in a fixed budget and in a fixed confidence context - and we only introduce it here to provide a benchmark for later structured problems. We recall here results in the problem dependent, fixed budget, setting, which is most relevant for this paper. \citet{locatelli2016optimal}  
prove that up to multiplicative constants, and additives $\log(TK)$ terms, in the exponential, the optimal probability of regret in this problem is $\exp(- \frac{T}{\sum_{i:\Delta_i> 0} \Delta_i^{-2}})$,
where $\Delta_i = |\tau - \mu_i|$.
 We present their results for completeness and comparison to the bounds under additional shape constraints in Table \ref{tab:PD1} - see also Subsection~\ref{ss:unstr}. The \tbp in the problem dependent regime is also studied by \citet{mukherjee2017thresholding} and \citet{JieZhong17bandits}, however they consider a problem complexity based also upon variance making their results not so relevant to our setting. The \textit{problem independent} regime for the \tbp is studied by \citet{cheshire2020}, we also present their results in Table~\ref{tab:PD1} for comparison across the different regimes. 
\vspace{-0.4cm}
\paragraph{Monotone constraint, \stbp.} We then consider the problem where on top of assuming that the distributions are sub-Gaussian, we assume that the sequence of means $(\mu_k)_{k\in[K]}$ is monotone - this is problem \stbp. This specific instance of the \tbp is introduced within the context of drug dosing by \citet{garivier2017thresholding}. In this paper, the authors provide an algorithm for the fixed confidence setting that is optimal asymptotically, in the fixed confidence regime. However the definition of the algorithms, as well as the provided optimal error bound, are defined in an implicit way and not so easy to relate in a simple way to the gaps $\Delta_i$ moreover it is not clear how to translate a result from the fixed confidence setting to the fixed budget one. On the other hand, the shape constraint on the means of the arms implies that the \stbp is related to \textit{noisy binary search}, i.e.~inserting an element into its correct place within an ordered list when only noisy labels of the elements are observed, see~\citet{feige1994computing}. They describe an algorithm structurally similar to ours, using a binary tree with infinite extension however they consider a simpler setting where the probability of correct labeling is fixed as some $\delta > \frac{1}{2}$ and go on to show that there exists an algorithm that will correctly insert an element with probability at least $1-\delta$ in $\mathcal{O}\left(\log\left(\frac{K}{\delta}\right)\right)$ steps. For further literature on the related yet different problem of noisy binary search, see \citet{feige1994computing}, \citet{ben2008bayesian}, \citet{emamjomeh2016deterministic}, \citet{Nowak09binary}. Again, these papers consider settings with more structural assumptions than our own and are focused on the problem independent, fixed confidence regime. The \textit{problem independent} regime for the \stbp is studied by \citet{cheshire2020}, we also present their results in Table~\ref{tab:PD1} for comparison across the different regimes. 

In this work, we prove that, up to universal multiplicative constants and additive $\log(K)$ terms in the exponential, the optimal error probability is $\exp( - T\min_k \Delta_k^2),$ which highlights the somewhat surprising fact that this structured monotone \tbp problem is akin to a one armed \tbp - see Subsection~\ref{ss:m}. We provide the Problem Dependent Monotone \tbp (\explore) algorithm that matches this bound, see Section~\ref{sec:alg}.\vspace{-0.4cm}

\paragraph{Concave constraint, \ctbp.}
We next consider the problem where on top of assuming that the distributions are sub-Gaussian, we assume that the sequence of means $(\mu_k)_{k\in[K]}$ is concave - this is problem \ctbp. Again, in the problem independent regime the \ctbp has been studied by \citet{cheshire2020}. In the problem dependent regime however, to the best of our knowledge, the \ctbp has not been studied in the literature. However the related problems of estimating a concave function and optimising a concave function are well studied in the literature. Both problems are considered primarily in the continuous regime which makes comparison to the $K$-armed bandit setting difficult. The problem of estimating a concave function has been thoroughly studied in the noiseless setting, and also in the noisy setting, see e.g.~\citet{simchowitz2018adaptive}, where a continuous set of arms is considered, under H\"older smoothness assumptions. The problem of optimising a convex function in noise without access to its derivative - namely zeroth order noisy optimisation - has also been extensively studied. See e.g.~\citet{yudin}[Chapter 9], and~\citet{wang2017stochastic, agarwal2011stochastic, liang2014zeroth}
to name a few, all of them in a continuous setting with dimension $d$. The
focus of this literature is however very different to ours and \citet{cheshire2020}, as the main difficulty under their assumption is to obtain a good dependence in the dimension $d$, and with this in mind logarithmic factors are not very relevant.

In this work, we prove that, up to universal multiplicative constants and additive $\log(K)$ terms in the exponential, the optimal error probability is $\exp( - T\min_k \Delta_k^2),$
which highlights the somewhat surprising fact that this structured concave \tbp problem is also akin to a one armed \tbp  - see Subsection~\ref{ss:c}. We provide the Problem Dependent Concave \tbp (\CTB) algorithm that matches this bound, see Section~\ref{sec:alg}.\vspace{-0.2cm}

\paragraph{Organisation of the paper} This paper is structured as follows. In Section~\ref{sec:setting} we formally introduce the \tbp setting along with the monotone and concave shape constraints. We also describe the performance criterion - probability of error, we will be primarily using for the duration of the paper. Following this, upper and lower bounds on probability of error for all shape constraints are presented in Section~\ref{sec:minmaxthm}. Descriptions of algorithms achieving said upper bounds can be found in Section~\ref{sec:alg}. The results are discussed and compared to related work in Section~\ref{sec:discussion}. In Appendix~\ref{app:experiments} we conduct some preliminary experiments to explore how our theoretical results translate in practice. All proofs are found in the Appendix.
\vspace{-.4cm}
\section{Setting}
\label{sec:setting}

\paragraph{Problem formulation} The learner is presented with a $K$-armed bandit problem $\unu =\{\nu_1,\ldots,\nu_K\}$, with $K\geq 3$, where $\nu_k$ is the unknown distribution of arm $k$.

Let $\sigma^2 \geq 0$. We remind the learner that distribution $\nu$ of mean $\mu$ is said to be $\sigma^2$-sub-Gaussian if for all $t \in \R$ we have,
\vspace{-.2cm}
$$\EE_{X \sim \nu}\big[e^{t\left(X - \mu\right)}\big] \leq \mathrm{exp}\left(\frac{\sigma^2 t^2}{2}\right)\,.$$
In particular the Gaussian distributions with variance smaller than $\sigma^2$ and the distributions with absolute values bounded by $\sigma$ are $\sigma^2$-sub-Gaussian.

Let $\cB:= \cB( K, \sigma^2)$ be the set of all bandit problems as presented above, i.e.~where the distributions $\nu_k$ of the arms are all $\sigma^2$ sub-Gaussian.

In what follows, we assume that all $\unu \in \cB$, and we write $\mu_k$ for the mean of arm $k$. Let $\tau\in \mathbb R$ be a fixed threshold known to the learner. We aim to devise an algorithm which classifies arms as above or below threshold $\tau$ based on their means. That is, the learner aims at finding the vector $Q\in\{-1,1\}^K$ that encodes the true classification, i.e. $Q_k = 2\ind_{\{\mu_k \geq \tau \}}-1$  with the convention $Q_k = 1$ if arm $k$ is above the threshold and $Q_k = -1$ otherwise. The \emph{fixed budget} bandit sequential learning setting goes as follows: the learner has a budget $T>0$ and at each round $t \leq T$, the learner pulls an arm $k_t\in [K]$ and observes a sample $Y_{t}\sim \nu_{k_t}$, conditionally independent from the past. After interacting with the bandit problem and expending their budget, the learner outputs a vector $\hQ\in\{-1,1\}^K$ and the aim is that it matches the unknown vector $Q$ as well as possible.

\vspace{-0.4cm}

\paragraph{Unstructured case \tbp} 
In the \textit{problem dependent} regime, for $\bD \in \R^K_+$, we consider the following class of problems
\[\cdB = \{\nu \in \cB : \forall k \in [K], \; |\mu_k - \tau| =  \bD_k\}\;.\]

\paragraph{Monotone case \stbp} 

We denote by $\Bs$ the set of bandit problems,
$$\Bs :=  \{\nu \in \cB:\ \mu_1 \leq\mu_2 \leq  \ldots \leq\mu_K\}\;,$$
where the learner is given the  additional information that the sequence of means $\left( \mu_k \right)_{k \in [K]}$ is a monotonically increasing sequence. We denote by $\DeltaBs = \{\bD \in \R^K_+: \exists \nu\in\Bs, \forall k\in[K],\, |\mu_k-\tau|=\bD_k\} $ the set of possible vectors of gaps in $\Bs$ - i.e.~the set of sequences $\bD$ that would correspond to at least one problem in $\Bs$. In the \textit{problem dependent} regime, for $\bD \in \DeltaBs$, we consider the following class of problems
\[\Bsp = \{\nu \in \Bs : \forall k \in [K], \; |\mu_k - \tau|= \bD_k\}\;.\vspace{-0.2cm}\]

\paragraph{Concave case \ctbp}
We will denote by $\Bc$ the set of bandit problems, 
\[\Bc := \left\{ \nu \in \cB: \forall 1<k< K-1, \frac{1}{2}\mu_{k-1} + \frac{1}{2}\mu_{k+1} \leq \mu_k\right\}\,, \]
where the learner is given the  additional information that the sequence of means $\left( \mu_k \right)_{k \in [K]}$ is concave. We denote by $\DeltaBc = \{\bD \in \R^K_+: \exists \nu\in\Bc, \forall k\in[K],\, |\mu_k-\tau|=\bD_k, \, \exists l: \mu_l \geq \tau\} $ the set of possible vectors of gaps in $\Bc$ where at least one arm is above threshold - i.e.~the set of sequences $\bD$ that would correspond to at least one problem in $\Bc$ where at least one arm is above threshold. 
In the \textit{problem independent} regime, for $\bD \in \DeltaBc$, we consider the following class of problems
\[\Bsc := \left\{\nu \in \Bc : \forall k <K, |\mu_k - \tau| \in \left[\frac{\bD_k }{2},3\frac{\bD_k }{2}\right]\right\}\,.\]

\begin{remark}
The classes of problems $\cdB, \Bsp, \Bsc$ contain bandit problems in resp.~$\cB,\Bs, \Bc$ that are `local' around $\bD$ in the sense that while the sign of $\mu_k - \tau$ is arbitrary - although severely restricted by the shape constraint when it comes to $ \Bsp, \Bsc$ - the gap of arm $k$ is fixed to being - approximately, for the concave case set $\Bsc$ - $\bD_k$. This implies that in each case and on top of the respective shape constraint, we restrict ourselves to a small class of problems whose complexity is entirely characterised by $\bD$, in a \textit{problem dependent sense}.\vspace{-0.2cm}
\end{remark}

\paragraph{Strategy} A strategy is a sequence of functions that maps the information gathered in the past to an arm and finally to a classification. Precisely, if we denote by $I_t$ the information available to the player at time $t$, that is $I_t = \left\{ Y_1,Y_2,\ldots, Y_{t}\right\}$,
with the convention $I_0=\emptyset$. Then a strategy $\pi = \big((\pi_t)_{t\in[T]}, \hQ^\pi \big) $ is given by a sampling rule $\pi_t(I_{t-t}) = k_t \in [K]$ and a classification rule 
$\hQ^\pi(I_T) = \hQ \in \{-1,1\}^K$.

\vspace{-0.3cm}
\paragraph{Minimax expected regret}
The \textit{problem independent},  \textit{fixed budget} objective of the learner following the strategy $\pi$ is then to minimize the expected simple regret of this classification for $\hat Q:= \hat Q^\pi$:
\[r_{T}^{\unu,\pi} = \EE_{\unu} \!\left[\max_{\{k\in[K]:\ \hQ_k^\pi \neq Q_k\}} \Delta_k \right],\] 
where $\Delta_k : = |\tau - \mu_k|$ is the gap of arm $k$, and where $\EE_{\unu}$ is defined as the expectation on problem $\unu$ and $\mathbb P_{\unu}$ the probability. However, the focus of this paper is on the \textit{problem dependent} regime where, as usual, we consider as a performance criterion rather the related \textit{probability of error}
\vspace{-.2cm}
\[e_{T}^{\unu,\pi} = \PP_{\unu} \!\left(\exists k \in [K] : \hQ_k^\pi \neq Q_k\right)\,.\] 
When it is clear from the context we will remove the dependence on the bandit problem $\unu$ and/or the strategy $\pi$. Note that if we denote by $\Dm = \min_{k\in[K]} \bD_k$ the minimum of the gaps then 
\[
r_{T}^{\unu,\pi} \geq \Dm e_{T}^{\unu,\pi}\,.
\]
Consider a set of bandit problems $\tilde{\mathcal B}\subset \cB$. The minimax optimal probability of error on $\tilde B$ is then
$$e_T^*(\tilde{\mathcal B}) := \inf_{\pi~{\mathrm{strategy}}} \sup_{\unu \in \tilde{\mathcal B}} e_T^{\unu,\pi}.$$
We will study this quantity over the local classes $\cdB, \Bsp, \Bsc$.

\begin{remark}
As argued above, the classes $\cdB, \Bsp, \Bsc$ contain only bandit problems that satisfy their respective shape constraint and whose complexity is entirely characterised by $\bD$, in a \textit{problem dependent sense}. Studying the minimax probability of error over these very restricted classes is therefore a very meaningful way of studying the problem dependent regime of structured \tbp problems - and we expect this probability of error to heavily depend on $\bD$. The focus of this paper is to characterise this dependence in a tight manner.\vspace{-0.2cm}
\end{remark}



\section{Minimax rates}
\label{sec:minmaxthm}
In this section we present upper and lower bounds on probability of error for all three shape constraints. Given a vector $\bD \in \mathbb{R}^K_+$ we denote $\Dm = \min_{k\in[K]} \bD_k$.
\vspace{-.2cm}
\subsection{Problem dependent unstructured setting \tbp} \label{ss:unstr}

The unstructured thresholding bandit in the problem dependent regime has already been considered in the literature. We remind results from~\citet{locatelli2016optimal}, where they provide tight upper and lower bounds over $e_T^*(\cdB)$, for any $\bD \in \R^{K}_+$. In our context they prove that
\begin{small}
\begin{align*}
\exp\!\left(-\frac{3}{\sigma^{2}} \frac{T}{H} - 4\sigma^{-2} \log \left(12 (\log T +1) K\right)\right)&\leq e_{T}^*(\cdB) \\
\leq  \exp\!\Big(-\frac{1}{64 \sigma^2}\frac{T}{H} +2 \log &\left((\log T +1) K\right)\!\!\Big),    
\end{align*}
\end{small}
where $H = \sum_{i: \bD_i>0} 1/\bD_i^2$ - see Theorems~1 and~2 by~\citet{locatelli2016optimal}. This implies that up to multiplicative universal constants and whenever $T \geq H\sigma^2 \log(\log(T)+K)$, it holds that
\vspace{-.2cm}
$$-\log\left(e_{T}^*(\cdB) \right) \asymp \frac{1}{\sigma^{2}} \frac{T}{H},$$
and upper and lower bound match up to universal multiplicative constants in the exponential of the error probability. The quantity $H$ is therefore the problem dependent quantity that characterises the difficulty of the problem. Note that of course, the APT algorithm by~\citet{locatelli2016optimal} does not take any information on the class - $\bD$, but also $\sigma^2$ - as parameters, and is essentially parameter free.

In this paper, we won't therefore discuss further this unstructured setting - the reminder provided here is only to be taken as a benchmark for the rest of the paper. We will on the other hand focus on the structured problems - monotone and concave and study how the minimax error probability evolves, in particular depending on the problem-dependent quantities $\bD$.
\vspace{-0.2cm}

\subsection{Problem dependent monotone setting} \label{ss:m}

Given a class of problems $\Bsp$ for some $\bD \in \DeltaBs$, the following theorem provides a lower bound on the probability of error for any strategy $\pi$. The proof of Theorem \ref{thm:depmon_down} can be found in Appendix \ref{app:proof_monotone}.
\begin{theorem}\label{thm:depmon_down}
Let $\bD \in\DeltaBs$. For any strategy $\pi$ there exists a monotone bandit problem $\unu \in \Bsp$ such that \vspace{-1mm}
$$e_{T}^{\unu,\pi} \geq   \frac{1}{4}\exp 
\!\left(-\frac{T\Dm^2}{\sigma^2}\right)\,.$$
\end{theorem}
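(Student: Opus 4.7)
The plan is to derive the lower bound by a standard change-of-measure argument, reducing the task to distinguishing two monotone bandits that differ only at an arm realising the minimum gap.

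First I would unpack the structure of $\DeltaBs$. For any $\unu\in\Bs$ with gaps $\bD_k=|\mu_k-\tau|$, there is a crossing index $k_0$ such that $\mu_k=\tau-\bD_k$ for $k<k_0$ and $\mu_k=\tau+\bD_k$ for $k\geq k_0$. Monotonicity then forces $\bD_1\geq\cdots\geq\bD_{k_0-1}$ and $\bD_{k_0}\leq\cdots\leq\bD_K$, so $\bD$ is valley-shaped and $\Dm=\min(\bD_{k_0-1},\bD_{k_0})$ is achieved at the bottom of the valley. Crucially, the min-gap arm can have its sign flipped while keeping the sequence of means monotone: if $\Dm=\bD_{k_0}$ we replace $\mu_{k_0}=\tau+\bD_{k_0}$ by $\tau-\bD_{k_0}$ (moving the crossing to $k_0+1$), and the ordering is preserved precisely because $\bD_{k_0-1}\geq\bD_{k_0}$; the other case is symmetric. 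This produces another element of $\Bsp$.

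Second, I would fix such a min-gap arm $k^\star$ and consider two Gaussian bandits $\unu,\unu'\in\Bsp$ with variance $\sigma^2$ (hence $\sigma^2$-sub-Gaussian) that agree on every arm except $k^\star$, where they differ by a sign flip. The true classifications disagree exactly at $k^\star$, so for any estimator $\hQ$ at least one of $\{\hQ_{k^\star}\neq Q_{k^\star}\}$, $\{\hQ_{k^\star}\neq Q'_{k^\star}\}$ must hold, and these two events feed a classical Le Cam style bound.

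Third, I would invoke the Bretagnolle--Huber inequality together with the usual KL chain rule for bandit trajectories. Since the two problems only differ at $k^\star$ and the means there are separated by $2\Dm$,
\[
\mathrm{KL}(\PP_{\unu},\PP_{\unu'}) \;=\; \EE_{\unu}[N_T(k^\star)]\cdot\frac{(2\Dm)^2}{2\sigma^2} \;\leq\; \frac{2T\Dm^2}{\sigma^2}.
\]
Bretagnolle--Huber then yields $e_T^{\unu,\pi}+e_T^{\unu',\pi}\geq\tfrac{1}{2}\exp\!\big(-\mathrm{KL}(\PP_{\unu},\PP_{\unu'})\big)$, and taking the worse of the two problems delivers the announced bound (up to a universal constant in the exponent, absorbed in the statement).

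The main obstacle is the first step: verifying that for \emph{every} $\bD\in\DeltaBs$ one can exhibit two alternatives in $\Bsp$ that differ only at some arm realising $\Dm$. This is where the monotone shape constraint genuinely enters; without it, only a single $k$ with small $\bD_k$ would be insufficient and one would need $\Omega(K)$ alternatives, recovering the $T/H$ rate of the unstructured setting rather than the one-armed $T\Dm^2$ rate.
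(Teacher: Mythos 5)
Your proposal is correct and follows essentially the same route as the paper: a two-point argument between two monotone problems that differ only by a sign flip at the arm realising $\Dm$, combined with the chain rule and data-processing for KL and a Le Cam--type inequality (the paper uses $\kl(p,q)\geq p\log(1/q)-\log 2$ where you invoke Bretagnolle--Huber; the two are equivalent up to a universal constant in the exponent). Your explicit check that the sign-flipped problem remains in $\Bsp$ is the one step the paper leaves implicit, and your (correct) evaluation of the per-sample KL as $(2\Dm)^2/(2\sigma^2)$ only costs a universal constant in the exponent relative to the stated bound.
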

 Now the following theorem gives an upper bound on the probability of error for the \explore algorithm. The proof of Theorem~\ref{thm:depmon_up} can be found in Appendix \ref{app:proof_monotone}.
\begin{theorem}\label{thm:depmon_up} 
Let $\nu \in\Bs$ associated with arm gaps $\Delta$, and assume that $T > 36\log(K)$. 
The algorithm \explore satisfies the following bound on error probability:
$$e_{T}^{\unu,\explore} \leq \exp\!\left(-\cmon \frac{T\Delta_{min}^2}{\sigma^2}+\cmon' \log(K) \right)$$
where $\cmon=1/48$ and $\cmon' =12$.  
\end{theorem}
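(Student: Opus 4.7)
Monotonicity of $(\mu_k)$ implies that the classification vector $Q$ is completely determined by the crossing index $k^\star := \min\{k\in[K]:\mu_k\geq \tau\}$ (with $k^\star=K+1$ if no arm is above the threshold), so any strategy which recovers $k^\star$ also outputs $\hat Q=Q$. I would therefore analyse \explore as a noisy binary search for $k^\star$ on $[K]$ and reduce correctness to a single uniform concentration event on the empirical means of the arms the algorithm actually queries.

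\emph{Step 1 (Clean event).} For each arm $m$ pulled by \explore let $n_m$ be its (random) number of pulls and $\hat\mu_m$ the associated empirical mean. Define
\[
\mathcal{E} := \big\{\,\forall m \text{ pulled by }\explore:\ |\hat\mu_m-\mu_m| < \Delta_{\min}/2\,\big\}.
\]
The $\sigma^2$-sub-Gaussian Hoeffding bound, combined with a peeling union bound over the at most $K$ possible arms and the $O(\log T)$ dyadic scales that $n_m$ can take during an adaptive schedule, yields
\[
\PP(\mathcal{E}^c) \;\leq\; C\,K\,\log T\cdot \exp\!\left(-\frac{n_0\,\Delta_{\min}^2}{8\sigma^2}\right),
\]
where $n_0$ is the minimum number of pulls guaranteed by \explore to any queried arm.

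\emph{Step 2 (Correctness on $\mathcal{E}$ and conclusion).} Under $\mathcal{E}$, every gap satisfies $|\mu_m-\tau|\geq \Delta_{\min}>2|\hat\mu_m-\mu_m|$, so $\mathrm{sign}(\hat\mu_m-\tau)=\mathrm{sign}(\mu_m-\tau)$ for every pulled $m$. A short induction on the depth of the binary-search tree then shows that every left/right branching is correct, so \explore identifies the true $k^\star$ and, by monotonicity, outputs $\hat Q=Q$. The binary search visits at most $\lceil \log_2 K\rceil$ nodes; together with the hypothesis $T>36\log K$, the explicit schedule of \explore yields $n_0 \geq T/c_1$ for an absolute constant $c_1$. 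Substituting into the bound of Step~1 and absorbing the $CK\log T$ prefactor and the various slack constants into the additive $12\log K$ term in the exponent gives the claimed inequality.

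\emph{Main obstacle.} The delicate step is producing the allocation bound $n_0 \asymp T$: a naive binary search that splits the $T$ pulls equally among the $\log_2 K$ levels would yield only $n_0\asymp T/\log K$ and hence an exponent of order $T\Delta_{\min}^2/(\sigma^2\log K)$. To obtain a linear-in-$T$ exponent one has to exploit the sequential, doubling-style testing embedded in \explore: arms visited at coarse levels of the tree have large gaps and are dispatched in few pulls, so by a telescoping argument over the $O(\log K)$ levels the bulk of the budget is effectively concentrated on the arm closest to $\tau$, whose gap is exactly $\Delta_{\min}$. This is the quantitative form of the ``one-armed'' heuristic discussed in Subsection~\ref{ss:m}, and making this budget-telescoping argument rigorous is the technical core of the proof.
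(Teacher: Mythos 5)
There is a genuine gap, and it sits exactly where you flagged your ``main obstacle'': the single-clean-event skeleton of Step~1--2 cannot produce the claimed rate for this algorithm. First, the allocation claim $n_0\gtrsim T$ is false: \explore samples each of the three arms of the current node exactly $T_2=\lfloor T/(3T_1)\rfloor\asymp T/\log K$ times per visit, with no gap-adaptive or doubling schedule, and it does not pool samples across visits. Moreover, in a worst-case monotone instance every gap can equal $\Delta_{\min}$ (half the arms at $\tau-\Delta_{\min}$, half at $\tau+\Delta_{\min}$), so there is no ``coarse levels have large gaps'' structure to telescope over: every queried arm must be resolved at scale $\Delta_{\min}$ from only $T/\log K$ samples, and your event $\mathcal{E}$ then fails with probability of order $\exp\bigl(-c\,T\Delta_{\min}^2/(\sigma^2\log K)\bigr)$. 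That is the rate of the \naive{} binary search, off by a $\log K$ factor in the exponent, and no union-bound bookkeeping recovers the loss. Second, requiring every branching to be correct discards the one feature that makes the theorem true, namely the backtracking: the algorithm is designed to tolerate a constant fraction of wrong decisions, and any analysis that conditions on zero wrong decisions is analysing a different (weaker) algorithm.

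The paper's proof takes a structurally different route that you should compare against. It defines a per-round favourable event $\xi_t$ (the three current empirical means are $\Delta_{\min}$-accurate), whose conditional failure probability is only $p_0\le 6\exp\bigl(-T_2\Delta_{\min}^2/(2\sigma^2)\bigr)\le 1/8$ --- deliberately \emph{not} the target error probability. A drift argument on the tree distance $D_t$ to the correct leaf (Lemmas~\ref{lem:allit_mono} and~\ref{lem:dxic_mono}: $D_{t+1}\le D_t+1$ always, $D_{t+1}\le D_t-1$ on $\xi_t$) shows the output is wrong only if at least $T_1/4$ of the $T_1$ rounds are unfavourable. A Chernoff bound on $\sum_t\ind_{\bar\xi_t}$ then gives error probability at most $e^{-T_1\kl(1/4,p_0)}\approx p_0^{T_1/4}\approx\exp\bigl(-T_1T_2\Delta_{\min}^2/(8\sigma^2)\bigr)$, and it is the product $T_1T_2\asymp T$ --- i.e.\ raising a per-round probability that depends on $T/\log K$ to the power $\log K$ --- that restores the linear-in-$T$ exponent. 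To repair your proof you would need to replace the global event $\mathcal{E}$ and the ``every branching is correct'' induction with this tolerate-a-constant-fraction-of-errors argument; the budget-telescoping you propose instead is not a property the algorithm has.
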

The parameter free algorithm \explore is described in Sections~\ref{sec:alg} - see also Appendix \ref{app:proof_monotone}.

The assumption on $T$ is reasonable as in the monotone setting it is clear no algorithm can gain enough information in less than $\log(K)$ pulls, see~\citet{cheshire2020}. Note that combining both bounds yields that whenever $T > 36\log(K)/\Dm^2$:
\vspace{-.2cm}
$$-\log\left(e_T^*(\Bsp) \right) \asymp \frac{1}{\sigma^{2}} T \Dm^2,$$

and upper and lower bound match up to universal multiplicative constants in the exponential of the error probability. Perhaps surprisingly, the number of arms plays no role in this rate - as long as we assume that $T > 36\log(K)/\Dm^2$. Only the minimal arm gap appears, and this amounts to saying that when $T > 36\log(K)/\Dm^2$, this problem is not more difficult - in order, up to universal multiplicative constants in the exponential - than a one-armed \tbp with gap $\min_k \Delta_k$! And that in a sense, even if we knew in our monotone problem the position of all means but one - the arm with minimal gap - with respect to the threshold, the problem would not be significantly easier.

\vspace{-0.2cm}



\subsection{Problem dependent concave setting}\label{ss:c}

Given a class of problems $\Bsc$ for some $\bD \in \DeltaBc$ the following theorem provides a lower bound on the probability of error for any strategy $\pi$. The proof of Theorem \ref{thm:depcon_down} can be found in Appendix \ref{app:proof_concave}.
\begin{theorem}\label{thm:depcon_down} 
Let $\bD \in \DeltaBc$. For any strategy $\pi$ there exists a problem $\nu \in \Bsc$ such that
$$e_{T}^{\unu,\pi} \geq    
\frac{1}{4}\exp \left(-9\frac{T\Dm^2}{\sigma^2}\right)
\;.$$
\end{theorem}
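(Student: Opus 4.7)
The plan is to apply the standard change-of-measure / Bretagnolle--Huber lower bound in a two-point form. We construct two concave bandit problems $\unu^1, \unu^2 \in \Bsc$ that (i) differ only at a single arm, call it $k^\star$, and (ii) have opposite correct labels at that arm, so that any classifier $\hQ^\pi$ must err at $k^\star$ under at least one of the two measures. We choose $k^\star$ so that the magnitude of the mean difference between $\unu^1$ and $\unu^2$ at $k^\star$ is of order $\Dm = \min_k \bD_k$.

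First I would use the assumption $\bD\in \DeltaBc$ to extract a reference concave problem $\nu^{(0)}$ realising the gaps $\bD$ and having at least one arm above the threshold. From $\nu^{(0)}$ I would build $\nu^1$ and $\nu^2$ by setting, at the index $k^\star$ achieving $\Dm$, $\mu^1_{k^\star} = \tau + \tfrac{3}{2}\Dm$ and $\mu^2_{k^\star} = \tau - \tfrac{3}{2}\Dm$, while keeping $\mu_k^1 = \mu_k^2$ for all $k \neq k^\star$. The remaining means are chosen within the envelope $|\mu_k-\tau|\in[\bD_k/2, 3\bD_k/2]$ so that both resulting sequences are concave. Pick Gaussian noise with variance $\sigma^2$ on each arm. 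With this choice the classifications $Q^1$ and $Q^2$ differ exactly at $k^\star$.

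Once such $\unu^1, \unu^2$ are available, the divergence computation is short. Only arm $k^\star$ contributes to $\KL(\PP_{\unu^1,\pi}, \PP_{\unu^2,\pi})$, and by the chain rule for bandits
\[
\KL(\PP_{\unu^1,\pi}, \PP_{\unu^2,\pi}) = \EE_{\unu^1}[N_{k^\star}(T)]\cdot \frac{(3\Dm)^2}{2\sigma^2} \leq \frac{9\,T\,\Dm^2}{2\sigma^2},
\]
since $N_{k^\star}(T) \leq T$. Applying Bretagnolle--Huber to the event $\{\hQ^\pi_{k^\star}=-1\}$ yields
\[
e_T^{\unu^1,\pi} + e_T^{\unu^2,\pi} \;\geq\; \frac{1}{2}\exp\!\left(-\KL(\PP_{\unu^1,\pi}, \PP_{\unu^2,\pi})\right),
\]
so at least one of the two probabilities of error is $\geq \frac{1}{4}\exp(-9T\Dm^2/(2\sigma^2))$, which implies the claimed bound since $9T\Dm^2/(2\sigma^2)\leq 9T\Dm^2/\sigma^2$.

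The main technical obstacle is the construction step: one must verify that both altered sequences $(\mu_k^1)$ and $(\mu_k^2)$ lie in $\Bc$, i.e.\ satisfy $\tfrac12(\mu_{k-1}+\mu_{k+1})\leq \mu_k$ everywhere, while respecting the envelope $|\mu_k-\tau|\in[\bD_k/2,3\bD_k/2]$. The strictest constraint is local to $k^\star$ and the two neighbouring indices $k^\star\pm 1$: because $\mu^2_{k^\star}=\tau-\tfrac32\Dm$ is the smaller of the two choices, one needs $\mu_{k^\star-1}+\mu_{k^\star+1}\leq 2\tau - 3\Dm$, which can be enforced by taking the neighbouring means on the appropriate side of $\tau$ within their envelope; the envelope itself provides sufficient slack because $\bD_{k^\star\pm 1}\geq \Dm$. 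Edge cases where $k^\star\in\{1,K\}$ are handled by ignoring the non-existent neighbour. After this careful but routine check, the KL computation and Bretagnolle--Huber step close the proof.
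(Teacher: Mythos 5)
Your high-level architecture (two alternative problems, chain rule for the KL, a Bretagnolle--Huber/Fano step) matches the paper's, and your final numerics are consistent with Theorem~\ref{thm:depcon_down}. The genuine gap is the construction of the alternative problem, which you flag as ``careful but routine'' but which is in fact the entire content of this lower bound under the concavity constraint, and your proposed single-arm perturbation fails in general. Concretely: take the affine (hence concave) sequence $\mu_k = \tau + (2k-3)\Dm$, so $\mu_1 = \tau - \Dm$, $\mu_2 = \tau + \Dm$, $\mu_3 = \tau+3\Dm$, \dots, with $k^\star = 1$ or $2$. Every interior concavity constraint $\tfrac12(\mu_{k-1}+\mu_{k+1})\le\mu_k$ holds with \emph{equality}, so moving $\mu_{k^\star}$ by $\Theta(\Dm)$ in either direction while freezing all other arms violates concavity either at $k^\star$ (if you move it down) or at $k^\star\pm1$ (if you move it up), and the slack available from repositioning the neighbours inside their envelopes $[\bD_k/2,\,3\bD_k/2]$ is not sufficient --- e.g.\ pushing $\mu_{k^\star}$ to $\tau+\tfrac32\Dm$ forces $\mu_{k^\star+1}\ge \tau+\tfrac94\Dm$, outside its envelope, and any repair cascades along the sequence. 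Moreover, if you do end up moving neighbours to restore concavity, your claim that ``only arm $k^\star$ contributes to the KL'' collapses, and with it the clean $\EE[N_{k^\star}(T)]\cdot(3\Dm)^2/(2\sigma^2)$ bound.

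The paper resolves exactly this difficulty in Lemma~\ref{lem:alternative_lb_ccv}: rather than perturbing one arm, it (essentially) translates the \emph{whole} mean vector vertically by an $O(\Dm)$ amount chosen through a case analysis (arm with minimal gap above vs.\ below threshold, sequence flat vs.\ not flat near the threshold), which preserves concavity for free, flips the label of $k^\star$, keeps every gap inside the $[\bD_k/2,\,3\bD_k/2]$ envelope, and guarantees $|\mu_k'-\mu_k|\le 3\Dm$ for \emph{every} $k$. The KL bound $\sum_k \EE[N_k^T]\,(\mu_k-\mu_k')^2/(2\sigma^2)\le 9T\Dm^2/(2\sigma^2)$ then comes from summing over all $T$ pulls across all arms, not from a single informative arm. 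So your divergence budget is right for the wrong reason; to close the proof you must either prove a structural lemma of the type of Lemma~\ref{lem:alternative_lb_ccv} or otherwise exhibit two genuinely concave problems in $\Bsc$ whose mean vectors differ by at most $O(\Dm)$ coordinatewise and disagree on at least one label.
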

Now the following theorem gives an upper bound on the probability of error for the \CTB algorithm. The proof of Theorem~\ref{thm:depcon_up} can be found in Appendix~\ref{app:proof_concave}.

\begin{theorem}\label{thm:depcon_up} 
Let $\nu \in \Bc$ with associated gaps $\Delta$ and assume $T > 108\log(K)$. 
The algorithm \CTB has the following bound on error, 
$$e_{T}^{\unu,\CTB} \leq 3\exp\left(-\ccon  \frac{T \Delta_{\min}^2}{\sigma^2}+\ccon'\log(K)\right) $$
where $\ccon=1/576$ and $\ccon' =12$.  
\end{theorem}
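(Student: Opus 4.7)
The plan is to reduce the concave instance to two monotone ones and then invoke Theorem~\ref{thm:depmon_up}. The key structural fact used throughout is that for any concave sequence $(\mu_k)_{k\in[K]}$ the super-level set $\{k:\mu_k\geq \tau\}$ is a (possibly empty) contiguous interval $[k_L,k_R]$, since a concave function crosses any horizontal line at most twice. Thus correct classification is equivalent to locating $k_L$ and $k_R$, and once an index $\hat k$ is known with $\mu_{\hat k}\geq \tau$, concavity guarantees that the restrictions of $(\mu_k)$ to $\{1,\ldots,\hat k\}$ and $\{\hat k,\ldots,K\}$ are respectively monotone increasing and monotone decreasing, each forming a bona fide \stbp instance in $\Bs$ with minimum gap at least $\Delta_{\min}$.

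I expect \CTB to proceed in two stages. \emph{Stage~1 (pivot identification):} a constant fraction of the budget is spent running a noisy search for an index $\hat k$ with $\mu_{\hat k}\geq \tau$, or certifying that no such index exists (in which case the correct output $\hat Q=(-1,\ldots,-1)$ is returned). Concavity permits such a search to terminate within $O(\log K)$ probing rounds: at each round, a few well-chosen candidate arms are sampled enough times to compare their empirical means with $\tau$, and concavity is used to discard a constant fraction of the surviving indices. A Hoeffding bound per probe combined with a union bound over the $O(\log K)$ probes yields a Stage~1 failure probability of the required form $\exp\bigl(-c_1 T\Delta_{\min}^2/\sigma^2+c_1'\log K\bigr)$, provided the per-probe sample size is tuned in a parameter-free way.

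\emph{Stage~2 (monotone sub-instances):} conditional on the pivot $\hat k$ being correct, the remaining budget is allocated to two calls of \explore, one on $\{1,\ldots,\hat k\}$ and one on $\{\hat k,\ldots,K\}$. By concavity each sub-array is monotone and has minimum gap at least $\Delta_{\min}$; moreover the global assumption $T>108\log K$ is calibrated precisely so that each sub-call satisfies the $T'>36\log K$ hypothesis of Theorem~\ref{thm:depmon_up}. Applying that theorem to each call bounds the failure probability of each monotone phase by $\exp\bigl(-T'\Delta_{\min}^2/(48\sigma^2)+12\log K\bigr)$. A union bound over the three failure events (Stage~1, left monotone, right monotone) then yields the announced bound $3\exp\bigl(-\ccon T\Delta_{\min}^2/\sigma^2+\ccon'\log K\bigr)$: the leading factor~$3$ absorbs the three failure events, while the constant $576=12\cdot 48$ arises by combining the per-call $1/48$ of Theorem~\ref{thm:depmon_up} with the overhead of the three-way budget split.

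The main obstacle is Stage~1. Unlike the monotone setting, where noisy binary search trivially localises the unique sign-change of $\mu_k-\tau$, a concave sequence can change sign twice relative to $\tau$ and is not globally sorted. Consequently the pivot search must invoke the full strength of concavity at every probe and avoid any dependence on the individual gaps $\bD_k$, since only $\Delta_{\min}$ may appear in the final exponent (no $\sum_k 1/\bD_k^2$-type sum may creep in). Because the algorithm is parameter-free---it knows neither $\bD$ nor $\Delta_{\min}$---the per-probe sample budget must be set in a worst-case fashion, which ultimately forces the additional logarithmic slack in $K$ and the constant blow-up relative to the purely monotone rate.
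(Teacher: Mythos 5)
Your high-level architecture --- find a pivot arm above threshold, split into two one-sided problems, run the monotone routine on each half, union bound over three failure events --- is exactly the paper's (\CTB runs \gradexplore, then \dexplore and \explore on $[1,\hat k]$ and $[\hat k,K]$ with budget $T/3$ each). But your Stage~1 analysis has a genuine gap. A per-probe Hoeffding bound combined with a union bound over the $O(\log K)$ probes requires \emph{every} probe to succeed; with a per-arm budget of order $T/\log K$ per probe, each probe fails with probability $\exp\bigl(-cT\Delta_{\min}^2/(\sigma^2\log K)\bigr)$, so the union bound only yields $\exp\bigl(-cT\Delta_{\min}^2/(\sigma^2\log K)+\log\log K\bigr)$ --- an exponent weaker by a factor of $\log K$ than the claimed $\exp\bigl(-c_1T\Delta_{\min}^2/\sigma^2+c_1'\log K\bigr)$. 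This is precisely the rate of the \naive{} binary search, which is suboptimal here. To recover the full rate one must tolerate a constant fraction of the $T_1=\Theta(\log K)$ rounds being bad and still succeed: this is why \gradexplore{} backtracks to the parent on detected anomalies and accumulates above-threshold arms into a list whose median is returned. The potential argument (Lemmas~\ref{lem:allit}, \ref{lem:dxic} and \ref{lem:lb_on_Gt}) shows the output is correct provided at most $T_1/8$ rounds are bad, and a Chernoff/KL bound on that count of bad rounds (Lemma~\ref{lem:conchern}) multiplies the two exponents to give $\exp\bigl(-T_1T_2\Delta_{\min}^2/(c\sigma^2)\bigr)\approx\exp\bigl(-T\Delta_{\min}^2/(c\sigma^2)\bigr)$. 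Without this mechanism Stage~1 cannot reach the stated bound.

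Two secondary points. First, your claim that concavity makes the restrictions to $\{1,\ldots,\hat k\}$ and $\{\hat k,\ldots,K\}$ monotone is false: if the peak of the concave sequence lies strictly to the left of $\hat k$, the left restriction increases and then decreases. What is true --- and what the paper uses via Remark~\ref{rem:relaxing_monotone} --- is that each half has a single threshold crossing, and the proof of Theorem~\ref{thm:depmon_up} only needs this weaker ``relaxed monotone'' property. Second, comparing empirical means to $\tau$ does not tell a probed below-threshold arm which half to recurse into; the paper's search is driven by estimated gradient signs, which concavity makes reliable only below threshold (consecutive increments are at least $2\Delta_{\min}$ there), and this is also why the algorithm cannot cleanly binary-search for the argmax and instead targets the whole above-threshold segment.
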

\vspace{-0.1cm}

The parameter free algorithm \CTB is described in Sections~\ref{sec:alg} - see also Appendix~\ref{app:proof_concave}.

The assumption on $T$ is reasonable as in the monotone setting it is clear no algorithm can gain enough information in less than $\log(K)$ pulls, see~\citet{cheshire2020}. Note that combining both bounds yields that whenever $T > 108\frac{\log(K)}{\Dm^2}$:
\vspace{-.3cm}
$$-\log\left(e_T^*(\Bsp) \right) \asymp \frac{1}{\sigma^{2}} T \Dm^2,\vspace{-0.3cm}$$

and upper and lower bound match up to universal multiplicative constants in the exponential of the error probability. Similar comments can be made here as in the case of the monotone \tbp in Section~\ref{ss:m}: the convex \tbp is also as difficult as a one-armed \tbp with gap $\min_k \Delta_k$. 
\vspace{-0.4cm}


\section{Optimal algorithms in the problem dependent regime}
\label{sec:alg}

\subsection{Monotone case \stbp} \label{sec:mon} 

We assume in this section, without loss of generality, instead of considering $K$ arms, we consider for technical reasons $K+2$ arms adding two deterministic arms $0$ and $K+1$ with respective means $\mu_0 = -\infty$ and $\mu_{K+1} = +\infty.$ While we assume that the distributions of the original $K$ arms are $\sigma^2$-sub-Gaussian the addition of two such arms will not invalidate our proofs, see Appendix  \ref{app:proof_monotone}. 
 We do this to ensure that, after re-indexing of the arms and adapting the number of arms, $\tau \in [\mu_1, \mu_{K}]$.

To match a minimax rate as described in Section \ref{sec:minmaxthm} we will utilise a modified version of the MTB algorithm described by \citet{cheshire2020}. The algorithm \explore performs a random walk on the set of arms $[K]$ as a binary tree. We consider the binary tree as \citet{cheshire2020} with an specific extension akin to that by \citet{feige1994computing}. 

\paragraph{Binary Tree}\label{sec:bin}
We associate to each problem $\unu \in \Bs$ a binary tree. Precisely we consider a binary tree with nodes of the form $v=\{L,M,R\}$ where $\{L,M,R\}$ are indexes of arms and we note respectively $v(l)=L, v(r)=R, v(m)=M$. The tree is built recursively as follows: the root is $\root =\{1,\floor*{(1+K)/2}, K\}$, and for a node $v=\{L,M,R\}$ with $L,M,R\in\{1,\ldots,K\}$ the left child of $v$ is $L(v)= \{L,M_l,M\}$ and the right child is $R(v)=\{M,M_r,R\}$ with $M_l = \floor*{(L+M)/2}$ and $M_r = \floor*{(M+R)/2}$ as the middle index between. The leaves of the tree will be the nodes $\{v = \{L,M,R\} : R = L +1\}$. If a node $v$ is a leaf we set $R(v) = L(v) = \emptyset$.  We consider the tree up to maximum depth $H= \floor*{\log_2(K)}+1$. We note $P\big(l(v)\big)=P\big(r(v)\big)$ the parent of the two children and let $|v|$ denote the depth of node $v$ in the tree, with $|\root| = 0$. We adopt the convention $P(\root) = \root$.

\vspace{-0.3cm}
\paragraph{Extended Binary Tree}
 We extend the above Binary tree in the following manner. For a leaf $v$ we replace the condition $R(v) = L(v) = \emptyset$ with the following: for any leaf $v = \{L,M,R\}$ we set $R(v) = \tilde v$ where $\tilde v = \{L,M,R\}$ and set $L(v) = \emptyset$. Note that $\tilde v$ is also a leaf therefore iterative application this relation will lead to an infinite extension. The result being that each leaf in our original binary tree is now the root of an infinite chain of identical nodes, see Figure~\ref{fig:binary_tree}. For practical purposes we need only consider such an extension up to depth $T$ and can simply cut the tree at this depth. 

\begin{remark}
We set $L(v) = \emptyset$ for some leaf $v$ during the extension of the binary tree as by construction all leaves of the original binary tree are of the form $\{v = \{L,M,R\} : R = L +1 \;\mathrm{and}\;M=L\}$.
\end{remark}
\vspace{-0.2cm}

In order to predict the right classification we want to find the arm whose mean is the one just above the threshold $\tau$. Finding this arm is equivalent to inserting the threshold into the (sorted) list of means, which can be done with a binary search in the aforementioned binary tree. But in our setting we only have access to estimates of the means which can be very unreliable if the mean is close to the threshold. Because of this there is a high chance we will make a mistake on some step of the binary search. For this reason we must allow \explore to backtrack and this is why \explore performs a binary search \emph{with corrections}. 

\paragraph{\explore algorithm} 
First, define the following integers 
\vspace{-.1cm}
\begin{equation}\label{eq:t1t2}
T_1 := \lceil 6 \log(K) \rceil \qquad T_2 := \floor*{\frac{T}{3T_1}}\;.
\end{equation}\vspace{-.5cm}

The algorithm \explore is then essentially a random walk on said binary tree moving one step per iteration for a total of $T_1$ steps. Let $v_1 = \root$ and for $t < T_1$ let $v_t$ denote the current node, the algorithm samples arms $\{v_t(j) : j \in \{l,m,r\}\}$ each $T_2$ times. Let the sample mean of arm $v_t(j)$ be denoted $\hat{\mu}_{j,t}$. \explore will use these estimates to decide which node to explore next. If an error is detected - i.e. the interval between left and rightmost 
sample mean does not contain the threshold, then the algorithm backtracks to the parent of the current node, otherwise \explore acts as the deterministic binary search for inserting the threshold $\tau$ in the sorted list of means. More specifically, if there is an anomaly, $\tau \not \in \left[\hat{\mu}_{l,t},\hat{\mu}_{r,t}\right]$, then the next node is the parent $v_{t+1} = P(v_t)$, otherwise if $\tau \in \left[\hat{\mu}_{l,t},\hat{\mu}_{m,t}\right]$ the the next node is the left child $v_{t+1} = L(v_t)$ and if $\tau \in \left[\hat{\mu}_{m,t},\hat{\mu}_{r,t}\right]$ the next node is the right child  $v_{t+1} = R(v_t)$. If at time $t$, $\tau \in \left[\hat{\mu}_{l,t},\hat{\mu}_{r,t}\right]$ and the node $v_t$ is a leaf, that is $v(r) = v(l)+1$, then due to the extension of our binary tree $R(v_t) = L(v_t) = \tilde v_t$ where $\tilde v$ is a duplicate of $v_t$. Hence $v_{t+1} = \tilde v_t$. Via this mechanism the \explore algorithm essentially gives additional preference the the node $v_t$. See \explore for details. We now formally state the parameter free \explore algorithm (Problem Dependent Monotone Thresholding Bandit Algorithm). We rely on the assumption $T > 36\log(K)$, see Theorem \ref{thm:depmon_up} to ensure $T_2 \geq 1$.\vspace{-0.2cm}

\begin{algorithm}[H]
\caption{\explore}
\label{alg:explore} 
\begin{algorithmic}
\STATE {\bfseries Initialization:} $v_1=\root$
\FOR{$t=1:T_1$}
\STATE sample $T_2$ times each arm in $v_t$\\
\IF{$ \tau \not\in[\hat{\mu}_{l,t},\hat{\mu}_{r,t}]$}
\STATE $v_{t+1} = P(v_t)$ 
\ELSIF{$\hat{\mu}_{m,t} \leq \tau \leq \hat{\mu}_{r,t}$}
\STATE$v_{t+1} = R(v_t)$
\ELSIF{$\hat{\mu}_{l,t} \leq \tau \leq \hat{\mu}_{m,t}$}
\STATE $v_{t+1} = L(v_t)$
\ENDIF
\ENDFOR
\STATE Set $\hk = v_{T_1 + 1}(r)$\\
\RETURN $(\hk, \hQ) :\quad  \hQ_k=  2\ind_{\{k \geq \hk \}}-1$
\end{algorithmic}
\end{algorithm}

\begin{figure}[H]
    \centering
    \includegraphics[width =0.8\linewidth]{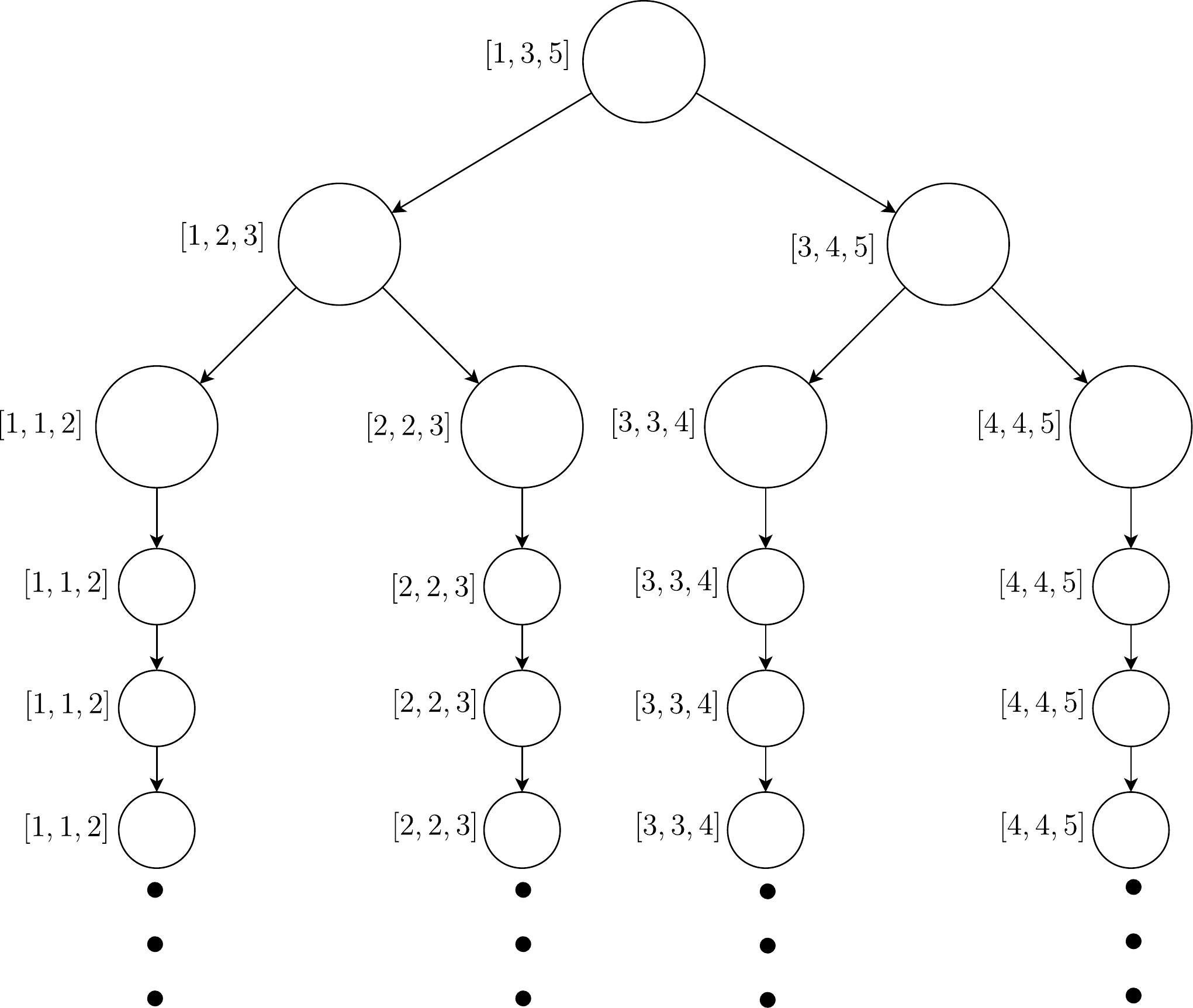}

    \caption{Extended binary tree for $K = 5$ }
    \label{fig:binary_tree}
\end{figure}\hfill
\vspace{-1cm}
\begin{remark}[Adaptation of \explore to a non-increasing sequence, \dexplore]\label{rem:Dexplore}
\explore is applied for a monotone non-decreasing sequence $(\mu_k)_{k\in[K]}$, and it is easy to adapt it to a monotone non-increasing sequence $(\mu_k)_{k\in[K]}$. In this case, we transform the label of arm $k$ into $K-k$, and apply \explore to the newly labeled problem - where the mean sequence in now non-decreasing. We refer to this modification as \dexplore.
\end{remark}

\begin{remark} [Relaxing the monotone assumption]\label{rem:relaxing_monotone} By inspecting the proof of Theorem~\ref{thm:depmon_up} in Appendix~\ref{app:proof_monotone} 
we can obtain the same guarantee for a larger class of problem than one with increasing means. Indeed we only need that there exists an arm for which all the arms before it have a mean below the threshold and all arm after have a mean above the threshold. Precisely the bound of Theorem~\ref{thm:depmon_up} holds also for problems that belongs to 
\begin{align*}
\Brs :=  &\{\nu \in \cB:\ \exists k \in [1,K],\, \forall j\leq k\ \mu_j\leq \tau, \\
&\quad\, \, \forall j\geq k+1\ \mu_j\geq \tau\}\;.
\end{align*}
\vspace{-0.8cm}

Note the same remark also applies for problems with monotone non-increasing sequence.
\end{remark}

\subsection{Concave case \ctbp}\label{sec:con}

We assume in this section, without loss of generality, instead of considering $K$ arms, we consider for technical reasons $K+2$ arms adding two deterministic arms $0$ and $K+1$ with respective means $\mu_0 = \mu_{K+1} = -\infty.$ While we assume that the distributions of the original $K$ arms are $\sigma^2$-sub-Gaussian the addition of two such arms will not invalidate our proofs, see Appendix \ref{app:proof_concave}.
We do this to ensure that after re-indexing $\tau> \mu_1,\mu_K$.

As in the monotone case we construct a binary tree to span the arms of the bandit problem.  
The construction of this tree is identical to that described in Section \ref{sec:mon} but without the infinite extension. We will use a variant off the \explore Algorithm, \gradexplore to move around the tree. The difference is that \gradexplore bases its movement off the estimated gradients of the arms as opposed to their sample means. The objective of \gradexplore is to find an arm with corresponding mean above threshold. Once such an arm has been identified we split our problem into two ``relaxed monotone" bandit problems - see Remark \ref{rem:relaxing_monotone}, one increasing and one decreasing. We then run \explore and \dexplore respectively. We split our budget evenly across the three algorithms: \gradexplore, \explore and \dexplore.

\paragraph{\gradexplore algorithm}
As with \explore the algorithm \gradexplore is essentially a random walk on the said binary tree moving one step per iteration for a total of $T_1$ steps. Let $v_1 = \root$ and for $t < T_1$ let $v_t$ denote the current node, the algorithm samples arms $\{v_t(l),v_t(l)+1,v_t(m),v_t(m)+1,v_t(r),v_t(r)+1\}\}$ each $T_2$ times. As in Section~\ref{sec:mon}, we adopt the convention that the arm $K+1$ is a Dirac distribution at $-\infty$. Let the sample mean of arm $v_t(j)$ be denoted $\hat{\mu}_{j,t}$ and the sample mean of arm $v_t(j) + 1$ be denoted $\hmu_{j+1,t}$. Let the estimated local gradient at arm $j$, that is $\hmu_{j,t} - \hmu_{j+1,t}$ denote $\hnabla_{j,t}$. \gradexplore will use these estimates to decide which node to explore next. If an error is detected - i.e. the left most or right most gradient is negative or positive respectively, then the algorithm backtracks to the parent of the current node, otherwise \gradexplore acts as the deterministic binary search for the maximum mean, $\max_{i\in[K]}\mu_i$. More specifically, if there is an anomaly, $\left(\hnabla_{l,t},\hnabla_{r,t}\right) \notin \left(\mathbb{R}_+,\mathbb{R}_-\right)$, then the next node is the parent $v_{t+1} = P(v_t)$, otherwise if $\hnabla_{m,t} < 0$ the next node is the left child $v_{t+1} = L(v_t)$ and if $\hnabla_{m,t} \geq 0$ the next node is the right child  $v_{t+1} = R(v_t)$. 
See Algorithm~\ref{alg:gradexplore} for details.
\vspace{-0.5cm}
\noindent
\begin{minipage}[t!]{0.49\textwidth}
\vfill
\begin{algorithm}[H]
\caption{\gradexplore}
\label{alg:gradexplore} 
\begin{algorithmic}
\STATE {\bfseries Initialization:} $v_1=\root$\\
\FOR{$t=1:T_1$}
\STATE $S_{t+1} = S_t$
\STATE for each $k \in v_t$ sample $\frac{T_2}{12}$ times the arms $k,k+1$
\IF{$\exists k\in \{l,m,r\} : \hmu_k > \tau$}
\STATE Append arm $k$ to the list $S_{t+1}$\\
\STATE $v_{t+1} = v_t$
\ELSIF{$\left(\hnabla_{l,t},\hnabla_{r,t}\right) \notin \left(\mathbb{R}_+,\mathbb{R}_-\right)$}
\STATE $v_{t+1} = P(v_t)$
\ELSIF{$\hnabla_{m,t} \geq 0$}
\STATE $v_{t+1} = R(v_t)$
\ELSIF{$\hnabla_{m,t} < 0$}
\STATE$v_{t+1} = L(v_t)$
\ENDIF
\ENDFOR

\end{algorithmic}
\end{algorithm}
\end{minipage}
\hfill
\begin{minipage}[t!]{0.49\textwidth}
\vfill
\begin{algorithm}[H]
\caption{\CTB}
\label{alg:CTB} 
\begin{algorithmic}
\STATE \textbf{run} \gradexplore
\STATE \textbf{output} list $S_{T_1}$ 
\IF{$\left|S_{T_1}\right| \leq \frac{T_1}{4}$}
\RETURN $\hQ = \{-1\}^K$ 

\ELSE{
\STATE $\hk = \median(S_{T_1})$
\STATE $l$ = output of \dexplore on set of arms $[1,\hk]$ budget: $\frac{T}{3}$
\STATE $r$ = output of \explore on set of arms $[\hk,K]$ budget: $\frac{T}{3}$

\RETURN $\hQ: \quad \hQ_k = 1 - 2\ind_{k<l} - 2\ind_{k>r}$
}
\ENDIF
\end{algorithmic}
\end{algorithm}
\end{minipage}

For the arms whose means are below threshold, due to the concave property gradients are essentially greater than $\Dm$ and can easily be estimated. Above threshold however gradients are less than $\Dm$ and are relatively hard to estimate. Therefore, although on the face \gradexplore is in part a binary search for the arm with maximum mean, in reality this is not feasible. The true utility of \gradexplore to the learner is to act as a binary search for the "set" of arms above threshold. If we refer to nodes containing an arm $k:\mu_k > \tau$ as "good nodes" the idea behind $\gradexplore$ is to spend a sufficient amount of time in exploring this set of nodes and adding "good arms" - i.e ones with a corresponding mean above threshold, to the list $S$. We can then output such an arm with high probability when outputting the median of $S_{T_1}$.

Once we have identified our arm above threshold we split our problem into two bandit problems where the classification can be done by binary search, see Remark~\ref{rem:relaxing_monotone} and~\ref{rem:Dexplore}. We can thus then apply \explore and \dexplore. Precisely, the complete procedure, namely \CTB (Problem Dependent- Concave Threshold Bandits), is detailed in Algorithm~\ref{alg:CTB}.
\vspace{-.5cm}


\section{Discussion}
\label{sec:discussion}
\vspace{-0.1cm}


\subsection{Algorithms \texorpdfstring{\explore}{} and \texorpdfstring{\CTB}{}}
Both the \explore and \CTB are based upon a binary search with corrections, this allows them to exploit the structure of the shape constraints reducing the problems to sets of arms with cardinally of order $\log(K)$, something in sharp contrast to existing algorithms for the vanilla setting. The difference between \explore and \CTB is that while \explore works exclusively on a binary tree based upon the classification of an arms mean above or below threshold, the sub algorithm \gradexplore of \CTB bases a binary tree on positive or negative gradient. Therefore \explore acts as a search for the point the arms cross threshold while \gradexplore acts as a search for the arm $k^* = \argmax_k(\bD_k)$. Another more subtle difference is that on a "good decision" at time $t$ - i.e when the sample means are well concentrated up to $\Dm$, \explore will make a step in the right direction. The same cannot be said for \gradexplore as we can only guarantee that the increments between arms are greater than $\Dm$ for arms below threshold, this is a direct result of the concave property. Therefore the true utility of \gradexplore is not to find $k^*$ but to find any arm $k:\mu_k >\tau$.  

It is worth noting that both algorithms described in this paper are parameter free, being adaptive not only to the hardness of the problem characterised by the gaps $\bD$, but also to the underlying sub-Gaussian assumption parameter $\sigma^2$. 
\vspace{-.2cm}
\subsection{Problem classes and optimality} 

In the monotone and concave settings we consider a very narrow class of problems and argue our classes are relevant for characterising the problem dependent regime - i.e.~are narrow enough. 

\begin{itemize}
    \item In the monotone setting this is obvious as the class of problems is defined by a specific vector $\bD \in \mathbb{R}^K_+$, so that all problems in this class have a similar complexity, bear in mind that our algorithms do not need to know $\Dm$ or any aspect of $\bD$. In fact, when constructing our lower bound, we just need a class with two problems where, given a first problem, we simply switch the arm with minimal gap $\Dm$ from below to above threshold in order to obtain the second problem - see the proof of Theorem~\ref{thm:depmon_down}.
    \item In the concave setting this approach is unfeasible as under the concave constraints the class of problems defined by a specific vector of gaps $\bD \in \mathbb{R}^K_+$ has very often cardinality 1 which is nonsensical for a lower bound. Instead, given a specific vector $\bD \in \mathbb{R}^K_+$ we consider a class of problems  with gaps within a proportional tolerance of $\bD$. This class is designed to be as narrow as possible while still containing multiple problems which disagree on the placement of certain arms above or below threshold. In fact, when constructing our lower bound, we just need a class with two problems where, starting from a first problem, we simply flip the arm with minimal gap and translate other means vertically in such a way to preserve concavity - see the proof of Theorem~\ref{thm:depmon_down}.
\end{itemize}
In both cases, we prove that for $T$ large enough, the problem dependent optimal probability of error is of order
\vspace{-.1cm}
$$\exp(- T \Dm^2/\sigma^2),\vspace{-0.3cm}$$

up to universal multiplicative constants inside and outside the exponential. This implies that from a problem dependent perspective, both problems are as difficult as a one armed bandit problem where we just want to decide whether the arm with minimal gap $\Dm$ is up or down the threshold, which is quite surprising - as the number of arms plays therefore no role asymptotically. While the lower bounds are relatively simple, the upper bounds are more interesting and challenging.
\vspace{-.3cm}


\subsection{Comparison of rates between settings}

Table \ref{tab:PD1} presents a comparison of results across the problem independent and dependent regimes. Although the results are not immediately comparable between the regimes, of particular interest is the difference in rates across the monotone and concave settings in the problem independent regime compared to the lack of difference between said rates in the problem dependent regime. 
 
\begin{table}[!h]
\begin{center}
\renewcommand{\arraystretch}{1.7}
 \begin{tabular}{c||c|c|c}
  problem:& independent  & dependent \\    \hline \hline 
Unconstrained & $\sqrt{\frac{K\log K}{T}}$ &$\exp\left(-\frac{T}{H}\right)$\\ \hline

Monotone & $\sqrt{\frac{\log K \lor 1}{T}}$ &$\exp\left(- T\Dm^2\right)$  \\ \hline
Concave & $\sqrt{\frac{\log\log K \lor 1}{T}}$& $\exp\left(- T\Dm^2\right)$\\
 \end{tabular}
\label{tab:PD1}
 \caption{Order of the optimal problem dependent probability of error, and of the problem independent expected simple regret for the three structured \tbp, in the case of all four structural assumptions on the means of the arms considered in this paper. All results are given up to universal multiplicative constants both in and outside the exponential. The first line concerns the problem independent setting and the simple regret, see \citet{cheshire2020}. The second line concerns the problem dependent setting and the probability of error, the main focus of this paper. The results for the monotone and concave are novel and can be found in this paper, see Section \ref{sec:minmaxthm}. The results for the unstructured setting are by \citet{locatelli2016optimal}, where they take $H = \sum_{i=1}^{K} \bD_i^{-2}$}
\end{center}
\vspace{-0.3cm}
 \end{table}

In both the monotone and concave setting an initial lower bound is one which does not depend upon $K$ - imagine the setting in which a learner places their entire budget on the two arms either side of the threshold. We show that in the problem dependent regime a binary search with corrections can match this bound, up to a $\log(K)$ term which disappears for large $T$. The intuition behind this is that as the depth of the tree is only $\log(K)$ the binary search can quickly find the point of interest and spend the majority of its time there. As both the concave and monotone problems can be solved with a binary search they therefore have the same rate. 

In the problem independent regime the situation is slightly more nuanced. In terms of lower bounds one is no longer restricted to a narrow class of problems and can consider a number of different problems, all close in terms of distributional distance but nevertheless disagreeing on the classification of certain arms above or below threshold. The cardinality of these sets differs between the monotone and concave setting - being $\log(K)$ and $\loglog(K)$ respectively. This then leads to a difference in the lower bound.  Upper bounds naturally must follow suit, while an adaptation of the standard binary search is still optimal in the monotone case in the concave case an algorithm using a binary search on a log scale is required. The above is by no means a rigorous explanation but hopefully gives the reader some intuition behind the differences in rates between the problem dependent and independent regimes, for more detail refer to \citet{cheshire2020}.\vspace{-0.3cm}

\section*{Acknowledgements}
The work of J.~Cheshire is supported by the Deutsche Forschungsgemeinschaft (DFG) DFG - 314838170, GRK 2297 MathCoRe.
The work of P.~M\'enard is supported by the SFI Sachsen-Anhalt for the project RE-BCI ZS/2019/10/102024 by the Investitionsbank Sachsen-Anhalt. The work of A. Carpentier is partially supported by the Deutsche Forschungsgemeinschaft (DFG) Emmy Noether grant MuSyAD (CA 1488/1-1), by the DFG - 314838170, GRK 2297 MathCoRe, by the DFG GRK 2433 DAEDALUS (384950143/GRK2433), by the DFG CRC 1294 'Data Assimilation', Project A03, and by the UFA-DFH through the French-German Doktorandenkolleg CDFA 01-18 and by the UFA-DFH through the French-German Doktorandenkolleg CDFA 01-18 and by the SFI Sachsen-Anhalt for the project RE-BCI.
\bibliographystyle{icml2021}
\bibliography{biblio}

\newpage
\appendix
\onecolumn 
\newpage
\section{Related work}\label{app:related}

\paragraph{Unstructured \tbp}
As mentioned in Section \ref{sec:minmaxthm} and demonstrated in Table \ref{tab:PD1} the unstructured problem dependent \tbp is already well studied in the literature, see \cite{chen2014combinatorial, chen2016combinatorial} for the fixed confidence setting and \citet{chen2014combinatorial, locatelli2016optimal, mukherjee2017thresholding}, \citet{JieZhong17bandits} for the fixed budget. As mentioned in Section \ref{sec:introduction}, \cite{locatelli2016optimal} is most relevant to our setting as they consider the fixed budget case. Their rate for the unstructured case depends upon the distribution of gaps across all the arms, which is of course to be expected. This again highlights the fact that the rate for the monotone setting depends only upon the minimum gap - that is the one adjacent to the threshold. 

\paragraph{Monotone constraint \stbp} The \stbp problem was first introduced by \citet{garivier2017thresholding} in the context of drug dosing. Their results are in contrast to ours as they consider the \textit{fixed confidence} setting. Furthermore the algorithm proposed is shown to be optimal only in the asymptotic case, i.e when the confidence $1-\delta$ converges to $1$. The monotone shape constraint of the \stbp implies it is related to a noisy binary search i.e.~inserting an element into its correct place within an ordered list when only noisy labels of the elements are observed. A naive approach to the \stbp would be a binary search with $\frac{T}{\log(K)}$ samples at each step of the binary search. However for our setting this is not optimal, even in the problem independent case, see \cite{cheshire2020}. In \cite{feige1994computing} this issue is solved by introducing a binary search with corrections. They describe an algorithm structurally similar to \explore, using a binary tree with infinite extension however they consider a simpler setting where the probability of correct labeling is fixed as some $\delta > \frac{1}{2}$ and go on to show that there exists an algorithm that will correctly insert an element with probability at least $1-\delta$ in $\mathcal{O}\left(\log\left(\frac{K}{\delta}\right)\right)$ steps. For further literature on the related yet different problem of noisy binary search see, \cite{feige1994computing}, \citet{ben2008bayesian}, \citet{emamjomeh2016deterministic}, \cite{Nowak09binary}. Again, these papers consider settings with more structural assumptions than our own and are focused on the problem independent, fixed confidence regime. The minimax rate on expected regret for the problem independent \MTB is presented by \citet{cheshire2020}.

For us the adaptation of the algorithm in Cheshire20 to the problem dependent regime is not obvious. An important fact in our problem dependent regime is that the number of arms $K$ stops appearing in the error bound which is of order $\exp(-c T\Delta_{\min}^2)$ whenever T is large enough, i.e. larger than $\log(K)/\Delta_{\min}^2$. In \citet{cheshire2020}, the number of arms appeared in all bounds and was the main topic of study therein - the bound for the monotone problem was $\sqrt{\log(K)/T}$. A key interesting phenomenon here is that somewhere between the problem independent and problem dependent regime, $K$ stops playing a role. This implies that a very different dynamic is happening in the problem dependent regime, as compared to the problem independent regime.

Precisely in \citet{cheshire2020} they consider a sequence of events $\xi_t$ that depend on $K$ and occur with constant probability - which is the target probability of error in the worst case. Lemma 15 therin then applies Hoeffding's-Azuma to the summation of the indicator functions of said events to achieve a bound on the probability of making too many bad decisions in the tree. In order to achieve a problem dependent bound, we consider events $\xi_t$ which are problem dependent - they depend on $\Delta_{\min}$ - but NOT on $K$. This event is now problem dependent and the probability of its complement depends on both $\Delta_{\min}$ and $T/\log(K)$ (the number of times we sample each arm), i.e. is of order $\exp(-c T\Delta_{\min}/\log K)$, which, interestingly, is NOT the target probability of error in the problem dependent regime, but is quite larger. Our Lemma 22 is then substantially more than just a problem dependent adaptation of Lemma 15 of \citet{cheshire2020}, as we need to leverage the fact that there are many events $\xi_t$ - here $\log K$ - in order to bypass the fact that the probability of each individual $\xi_t$ depends on $K$ in our setting. We use a Chernoff bound to bound the sum of the indicator functions of said events  - and then in turn the probability of error - by $\exp(-T \Delta_{\min}^2)$ - which is much smaller than the probability of each individual $\xi_t$. This phenomenon is not needed in \citet{cheshire2020}.

Another point in favour of the PD-MTB is that it is significantly simpler than that of the MTB of \citet{cheshire2020}. We use an infinite extension to the binary tree which allows it to take the final node as output. This means we don’t require an additional subroutine to choose from a list of arms the algorithm has collected.

\paragraph{Concave constraint} 
To the best of our knowledge the \ctbp was first introduced in \cite{cheshire2020} in the \textit{problem independent regime}. However the related problems of estimating a concave function and optimising a concave function are well studied in the literature. Both problems are considered primarily in the continuous regime which makes comparison to the $K$-armed bandit setting difficult. The problem of estimating a concave function has been thoroughly studied in the noiseless setting, and also in the noisy setting, see e.g.~\cite{simchowitz2018adaptive}, where a continuous set of arms is considered, under H\"older smoothness assumptions. The problem of optimising a convex function in noise without access to its derivative - namely zeroth order noisy optimisation - has also been extensively studied. See e.g.~\cite{yudin}[Chapter 9], and~\cite{wang2017stochastic, agarwal2011stochastic, liang2014zeroth}
to name a few, all of them in a continuous setting with dimension $d$. The
focus of this literature is however very different to ours and \cite{cheshire2020}, as the main difficulty under their assumption is to obtain a good dependence in the dimension $d$, and with this in mind logarithmic factors are not very relevant.



\section{Potential further work: Algorithms that are problem dependent and minimax-optimal simultaneously: Unimodal shape constraint} 

As described earlier after the related theorems, our algorithm \explore is optimal for minimising the probability of error, in a problem dependent sense, and up to universal multiplicative constants in the exponential. A relevant question is on whether it is possible to construct a strategy that is optimal both in this problem dependent sense, but also in a problem independent sense - i.e.~global minimax - when it comes to the simple regret.

 While designed for the problem independent regime - and reaching in this regime the minimax optimal simple regret of order $\sqrt{\frac{\log K}{T}}$ - we conjecture the \MTB algorithm, described by \citet{cheshire2020} is optimal also in the problem dependent regime, i.e.~that it achieves an upper bound on the probability of error of same order as that of \explore in Theorem~\ref{thm:depmon_up}. However note that to prove such an opitmaility, at least for us, would be none trivial, see the above Section \ref{app:related}.

 As with \explore the \MTB algorithm takes a monotone bandit problem mapped to a binary tree - although without the infinite extension, as input. The \MTB algorithm then consists of two sub algorithm. The first, \piexplore is an exploration phase, identical to our algorithm \explore. However, as opposed to simply outputting the end node the history of the random walk is passed to the second algorithm, \choose. The algorithm \choose selects all arms whose sample mean is within a certain tolerance of the threshold - chosen to be as small as possible while still producing a none empty set, and then takes the median of said set. This additional step is required as the \MTB algorithm aims to achieve the minimax rate on \textit{expected regret} - that is $\sqrt{\frac{\log (K)}{T}}$, and therefore wishes to output any arm $k : |\mu_k - \tau|\lesssim \sqrt{\frac{\log (K)}{T}}$. The idea being that during the explore phase enough time will be spent on nodes containing such arms. 
 
 If we consider the problem dependent regime, and whenever we are not in the trivial regime where $\Dm \lesssim \sqrt{\frac{\log (K)}{T}}$, we conjecture that the \MTB algorithm will spend sufficient time on the unique node $\tilde{v} : \mu_{\tilde{v}(l)} <\tau<\mu_{\tilde{v}(r)}$ with high probability matching the bound of Theorem~\ref{thm:depmon_up}. The algorithm \choose will then output arm $\tilde{v}(r)$. The problem dependent regime allows for a less convoluted approach - indeed \explore is very simple in comparison to \MTB. However, it is nevertheless important to note that for the monotone setting there exists an algorithm that is optimal in both problem dependent and problem independent regimes. 
 
 In regards to the concave case it is not as immediate that the \ctb algorithm by \citet{cheshire2020} will also be optimal in the problem dependent concave setting. The \ctb algorithm is significantly more complex than the \MTB as it successively applies a noisy binary search on a log scale to find arms increasingly close to threshold at a geometric rate. We however conjecture that it will be the case the \ctb is also optimal in the problem dependent regime. 
\subsection{Unimodal constraint}
A natural additional shape constraint for the $\tbp$ is a Unimodal one. Indeed bandit problems with a unimodal constraint are already considered in the literature, for the problem of minimising the cumulative regret or identifying the best arm under unimodal constraints see \citet{yu2011unimodal}, \citet{combes2014unimodal2}, \citet{paladino2017unimodal} and \citet{combes2014unimodal}. The \tbp in particular with a unimodal constraint is studied in \citet{cheshire2020} in the \emph{problem independent} regime. With the above work already in hand it is natural to consider a unimodal shape constraint on the $\tbp$ in the \emph{problem dependent} regime. A possible algorithm would be one which, similar to the \CTB, first finds an arm above threshold and then reduces the problem to one with a monotone constraint. We conjecture that if one considers a class of problems with $M$ arms above threshold the regret of the problem will be dominated by that of finding a single arm above threshold and will be of the order $\exp\left(-\frac{MT\Dm}{K} \right)$ with a matching lower bound. If one wishes to consider a narrower class based on a single vector of gaps, as in the concave or monotone setting one might hope to achieve a rate $\exp\left(- \frac{MT}{K}(\frac{1}{M}\sum_{i=1}^M \bD_i)^2\right)$ however this result, for both an upper and lower bound, appears not so straightforward. 
\section{Proofs relating to the Monotone setting}
\label{app:proof_monotone}
We first state a useful inequality. Let $\kl(p,q)$ be the Kullback-Leibler divergence between two Bernoulli distributions of parameter $p$ and $q$, 
\[
\kl(p,q) = p\log\left(\frac{p}{q}\right)+ (1-p) \log\left(\frac{1-p}{1-q}\right)
\]
It holds 
\begin{align}
    \kl(p,q) &=p\log\left(\frac{1}{q}\right) + (1-p)\log\left(\frac{1}{1-q}\right) +p\log(p)+(1-p)\log(1-p) \nonumber  \\
    &\geq p\log\left(\frac{1}{q}\right) -\log(2)\label{eq:fano}\,.
\end{align}
\begin{proof}[Proof of Theorem \ref{thm:depmon_down}]
\label{proof:depmon_down}
We denote by $N_k^t$ the number of times the arm $k$ is pulled until and included time $t$, i.e. $N_k^t = \sum_{s=1}^t \ind_{k_s=k}$.
Let $i = \argmin_{k\in[K]} \bD_k$, that is $\bD_i = \Dm$. Consider the two bandit problems $\unu^+$ and $\unu^-$ where 
\begin{align*}
    \nu^+_k =\begin{cases} \cN(\bD_k,\sigma^2) &\text{ if } k\geq i\\
    \cN(-\bD_k,\sigma^2) &\text{ else}
    \end{cases}\,, \qquad
    \nu^-_k =\begin{cases} \cN(\bD_k,\sigma^2) &\text{ if } k> i\\
    \cN(-\bD_k,\sigma^2) &\text{ else}
    \end{cases}\,.
\end{align*}
Note these bandit problems belong to the class of \stbp $\Bsp$. In particular we can lower bound the error by the probability to make a mistake in the prediction of the label of arm~$i$
\[
e_T^{\unu^+} \geq \PP_{\unu^+}(\hQ_i = -1) \qquad e_T^{\unu^-} \geq \PP_{\unu^-}(\hQ_i = 1)\,.
\]
We can assume that $\PP_{\unu^+}(\hQ_i = -1) \leq 1/2$ otherwise the bound is trivially true. Thanks to the chain rule then the contraction of the Kullback-Leibler divergence (e.g. see \citet{garivier2019explore}) and \eqref{eq:fano}, it holds 
\begin{align*}
    T\frac{\Dm^2}{2\sigma^2}\geq \EE_{\unu^+} [N_i^T] \frac{\Dm^2}{2\sigma^2} &= \KL(\PP_{\unu^+}^{I_T}, \PP_{\unu^-}^{I_T})\\
    & \geq \kl\big(\PP_{\unu^+}(\hQ_i = 1),\PP_{\unu^-}(\hQ_i = 1)\big)\\
    &\geq \PP_{\unu^+}(\hQ_i = 1) \log\!\!\left(\frac{1}{\PP_{\unu^-}(\hQ_i = 1)}\right) -\log(2)\,,
\end{align*}
where we denote by $\PP_{\unu}^{I_T}$ the probability distribution of the history $I_T$ under the bandit problem $\unu$. Thus, using that $\PP_{\unu^+}(\hQ_i = 1) = 1-\PP_{\unu^+}(\hQ_i = -1) \geq 1/2$ we obtain 
\[
\PP_{\unu^-}(\hQ_i = 1) \geq \frac{1}{4}\exp\!\left(-\frac{T\Dm^2}{\sigma^2}\right)\,.
\]
Which allows us to conclude that 
\[
\max(e_T^{\unu^+},e_T^{\unu^-}) \geq \frac{1}{4}\exp\!\left(-\frac{T\Dm^2}{\sigma^2}\right)\,.
\]



 

\end{proof}

\paragraph{Proof of Theorem \ref{thm:depmon_up}.}\label{proof:depmon_up} We assume in the proof, without loss of generality, that \[
\Dmin \geq \cmin \sqrt{\frac{\sigma^2\log(K)}{T}}
\]
with $\cmin = 13$. Indeed, otherwise, the bound of Theorem~\ref{thm:depmon_up} is trivially true. 

The proof of Theorem \ref{thm:depmon_up} is structured in the following manner. In our \emph{original} binary tree we know there is a unique leaf $v_{\Delta}$, such that $\tau \in [\mu_{v_{\Delta} (l)},\mu_{v_{\Delta} (r)}]$. Essentially we want to show that the explore algorithm will terminate in the subtree of this $v_{\bD}$ with high probability - recall that we extend our binary tree by attaching an infinite sub tree to each leaf, the nodes of which are identical to the respective leaf. At time $t$ we say our algorithm makes a favourable decision if all sample means are well concentrated - that is with $\Dmin$ of their true mean. On such a favourable decision we show that the explore algorithm will make a step towards the subtree of $v_\Delta$, or go deeper if it is already in it. Therefore if overall we can make sufficient proportion of favourable events we are guaranteed to terminate in the subtree of $v_{\Delta}$. We then show that this favorable event holds with high probability.



\paragraph{Step 1: Initial definitions and lemmas}
We denote by $ST(v)$ the subtree rooted at node $v$.
\begin{definition}\label{def:subtree}
The subtree $ST(v)$ of a node $v$ is defined recursively as follows: $v \in ST(v)$ and
$$\forall \; q \in ST(v), \; L(q), R(q) \in ST(v)\;.$$
\end{definition} 
We define $Z_{\Dmin}$, the set of good nodes, as
\begin{equation*}
Z_{\Dmin} := \{ v : \exists k \in \{l,m,r\} : |\mu_{v(k)} - \tau| \leq \Dmin\}\;,
\end{equation*} 
Note that $Z_{\Dmin}$ is simply the leaf $v_{\Delta}$ and it's sub tree attached during the infinite extension of the binary tree. 
At time $t$ we define $w_t$ as the node of maximum depth whose subtree contains both $v_t$ and  $Z_{\Dmin}$. Formally, for $t \leq T_1$, we let
\begin{equation}\label{eq:w}
w_t \in \argmax_{\{v: \tau \in [\mu_{v(l)},\mu_{v(r)}] \; \& \;v_t \in ST(v)\}} |v|\,.
\end{equation} 

\begin{lemma}\label{lem:w_unique}
The node $w_t$ is unique.
\end{lemma}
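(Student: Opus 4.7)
The plan is to exploit the fact that the candidates in the $\argmax$ of \eqref{eq:w} are all ancestors of $v_t$, which form a chain in the extended binary tree with pairwise distinct depths. First I would show that $\{v : v_t \in ST(v)\}$ is exactly the set of ancestors of $v_t$, including $v_t$ itself. This follows from a straightforward induction on the recursive definition of $ST(\cdot)$ in Definition~\ref{def:subtree}: if $v_t \in ST(v)$ and $v \neq v_t$, then by the recursion $v_t$ lies in $ST(L(v))$ or $ST(R(v))$ (including the duplicate child $\tilde v$ of a leaf in the infinite extension), and iterating this descent identifies a unique path from $v$ down to $v_t$.

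Next I would observe that along any such root-to-$v_t$ path in the extended binary tree, the depth $|\cdot|$ strictly increases by exactly one at each step --- this holds both for ordinary children, where $|L(v)| = |R(v)| = |v|+1$, and for the duplicate children $\tilde v$ attached below a leaf, which by construction have the leaf as their parent and hence depth $|v|+1$. Consequently the ancestors of $v_t$ have pairwise distinct depths, so at most one of them can attain the maximum in \eqref{eq:w}.

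The main (if minor) subtlety is handling the duplicates coming from the infinite extension: several distinct nodes share the same triple $\{L,M,R\}$ and therefore the same interval $[\mu_{v(l)},\mu_{v(r)}]$. However, these duplicates all sit on the single chain of ancestors of $v_t$ with strictly increasing depths, so uniqueness of the $\argmax$ is preserved. To ensure the $\argmax$ is actually attained, I would check that the admissible set is non-empty: the root $\root = \{1, \lfloor (1+K)/2 \rfloor, K\}$ is an ancestor of every node and satisfies $\tau \in [\mu_1,\mu_K]$ by the convention adopted at the start of Section~\ref{sec:mon} via the dummy arms $0$ and $K+1$ with means $\mp\infty$. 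Hence the unique deepest admissible ancestor of $v_t$ is well-defined, which yields the lemma.
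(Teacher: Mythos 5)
Your proposal is correct and rests on the same structural fact as the paper's own proof: the nodes whose subtrees contain $v_t$ are precisely the ancestors of $v_t$, which form a chain with pairwise distinct depths, so at most one can attain the maximal depth in \eqref{eq:w} (the paper phrases this as ``WLOG $q_t \in ST(w_t)$'' and then invokes maximality). Your additional checks --- that the infinite-extension duplicates still lie on a single chain of strictly increasing depth, and that the admissible set is non-empty because the root qualifies --- are sound and only make explicit what the paper leaves implicit.
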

\begin{proof} At time $t$ consider, a node $q_t$ which also satisfies \eqref{eq:w}. As $v_t \in ST(w_t)$ and $v_t \in ST(q_t)$ we can assume without loss of generality $q_t \in ST(w_t)$ with $|q_t| \geq |w_t|$. This then implies, from \eqref{eq:w}, that $|q_t| = |w_t|$ and as $q_t \in ST(w_t)$, we have $q_t =  w_t$.
\end{proof}

For $t\leq T_1$ we define $D_t$ as the relative distance from $v_t$ to $v_{\Delta}$, it is taken as the length of the path running from $v_t$ up to $w_t$ and then down (or up if $v_t \in Z_{\Dmin}$) to $v_{\Delta}$. Formally, we have
\[D_t := \left|v_t\right| - \left|w_t\right| + |v_{\Delta}|-|w_t| . \]
Note the following properties of $D_t$ and $w_t$,
\begin{gather}
    ST(v_t) \cap Z_{\Dmin} \neq \emptyset \Rightarrow v_t = w_t \;,\label{eq:ST} \\
    D_t \leq 0 \Rightarrow v_t = w_t \; \text{and} \; w_t,v_t \in Z_{\Dmin} \;. \label{eq:D0}
\end{gather}
We define the favorable event where the estimates of the means are close to the true ones for all the arms in $v_t$, At time $t$ we define the event 
$$\xi_t := \{\forall k\in\{l,m,r\}, |\hmu_{k,t} - \mu_{v_t(k)}| \leq \Dmin\}$$
and we denote $\bar{\xi}_t$ as the complement of $\xi_t$.

\paragraph{Step 2: Actions of the algorithm on all iterations}
After any execution of algorithm \explore note the following, for $t \leq T_1$,\; $v_t$ and $v_{t+1}$ are separated by at most one edge, i.e. 
\begin{equation}\label{eq:1edge}
v_{t+1} \in \{L(v_t), R(v_t), P(v_t) \}\,.
\end{equation} 
\begin{lemma}\label{lem:allit_mono} 
On execution of algorithm \explore for all $t \leq T_1$ we have the following,
\begin{equation*}
    D_{t+1} \leq D_t + 1 
\end{equation*}

\end{lemma}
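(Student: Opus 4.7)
By \eqref{eq:1edge}, the move is between neighbours in the extended tree, so $|v_{t+1}| - |v_t| \in \{-1, +1\}$. Since $D_t = |v_t| + |v_\Delta| - 2|w_t|$, the inequality $D_{t+1} \leq D_t + 1$ follows once one controls how $|w_{t+1}|$ compares to $|w_t|$. My approach is to describe $w_t$ concretely as a projection onto a single ``good path'' in the extended tree, and then to run a short case analysis on the transition.

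\textbf{Structural reduction.} The ancestors (in the extended tree) of elements of $Z_{\Dmin}$ form a single vertical chain $\mathcal{P}$: the path from the root down to $v_\Delta$, followed by the infinite right-chain of copies $v_\Delta = v_\Delta^{(0)}, v_\Delta^{(1)}, v_\Delta^{(2)}, \ldots$ that the extension attaches to $v_\Delta$ (recall $L(v_\Delta^{(i)}) = \emptyset$ and $R(v_\Delta^{(i)}) = v_\Delta^{(i+1)}$). The definition \eqref{eq:w}, together with Lemma \ref{lem:w_unique}, then says that $w_t$ is the deepest node of $\mathcal{P}$ that is an ancestor of $v_t$: it is $v_t$ itself when $v_t \in \mathcal{P}$, and otherwise it is the LCA of $v_t$ and $v_\Delta$ in the original binary tree, i.e.\ the point at which the ancestor chain of $v_t$ first meets $\mathcal{P}$.

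\textbf{Case analysis.} I then split on whether $v_t \in \mathcal{P}$ and on the direction of the move. If $v_t \in \mathcal{P}$ and $v_{t+1}$ stays on $\mathcal{P}$ (parent, or a child that remains on $\mathcal{P}$; note in the extended chain the only available child $v_\Delta^{(i+1)}$ is on $\mathcal{P}$), then $w_{t+1} = v_{t+1}$ and direct substitution gives $D_{t+1} - D_t \in \{-1, +1\}$. If $v_t \in \mathcal{P}$ but $v_{t+1} \notin \mathcal{P}$, this forces $v_t$ strictly above $v_\Delta$ and the step to be a descent to the off-$\mathcal{P}$ child; then $w_{t+1} = v_t$, so $|w_{t+1}| = |w_t|$ while $|v_{t+1}| = |v_t| + 1$, yielding $D_{t+1} = D_t + 1$. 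If $v_t \notin \mathcal{P}$, then a descent to a child $v_{t+1}$ stays off $\mathcal{P}$ and leaves $w_{t+1} = w_t$, giving $D_{t+1} = D_t + 1$; an ascent to $P(v_t)$ leaves $w_{t+1} = w_t$ whether $P(v_t)$ is on $\mathcal{P}$ (in which case $P(v_t) = w_t$ already) or off it, and in both sub-situations $|w_{t+1}| = |w_t|$ and $|v_{t+1}| = |v_t| - 1$, so $D_{t+1} = D_t - 1$. In every branch $D_{t+1} - D_t \in \{-1, +1\}$, which in particular gives the claim.

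\textbf{Anticipated difficulty.} The argument is entirely mechanical once the $\mathcal{P}$-projection picture is in place. The only genuinely delicate point is to handle the extended leaves correctly: a $v_t$ sitting inside the infinite chain must be checked to have no off-$\mathcal{P}$ child (this is precisely the rule $L(v_\Delta^{(i)}) = \emptyset$), so that the second case above can only arise when $v_t$ lies strictly above $v_\Delta$ on the original path. Everything else reduces to substituting the relevant value of $|w_{t+1}|$ into the formula for $D_{t+1}$.
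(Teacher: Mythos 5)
Your argument is correct, but it is organized differently from the paper's. The paper never identifies $w_t$ explicitly: it observes that unless $w_t=v_t$ and the move is to the parent, one has $v_{t+1}\in ST(w_t)$, so $w_t$ remains admissible in the $\argmax$ defining $w_{t+1}$ and hence $|w_{t+1}|\geq|w_t|$; combining this variational monotonicity with $|v_{t+1}|\leq|v_t|+1$ gives $D_{t+1}\leq D_t+1$ in one line, and the single exceptional case ($w_t=v_t$, ascend) is computed directly. You instead characterize $w_t$ outright as the projection of $v_t$ onto the chain $\mathcal{P}$ of nodes satisfying $\tau\in[\mu_{v(l)},\mu_{v(r)}]$ (the root-to-$v_\Delta$ path plus the infinite extension), i.e.\ as the LCA of $v_t$ and $v_\Delta$, and then enumerate the four transition types. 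This buys more than the lemma asks for --- you get the exact increment $D_{t+1}-D_t\in\{-1,+1\}$ in every branch and a transparent picture of the geometry, which in fact anticipates what Lemma~\ref{lem:dxic_mono} needs --- but it costs an extra structural claim: that the qualifying nodes form a single vertical chain, which rests on the uniqueness of the threshold-crossing leaf $v_\Delta$ (monotonicity of the means, and implicitly the absence of ties $\mu_M=\tau$, a point the paper also glosses over). Two negligible edge cases are shared with the paper and harmless: the convention $P(\root)=\root$ gives increment $0$ rather than $\pm1$, and your Case~B should also note that it cannot occur at $v_t=v_\Delta$ itself since $L(v_\Delta)=\emptyset$ and $R(v_\Delta)$ stays on $\mathcal{P}$ --- which you do note. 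No gap.
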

\begin{proof}
As the algorithm moves at most 1 step per iteration, see~\eqref{eq:1edge}, for $t \leq T_1$,  it holds
$$\left|v_{t}\right| - \left|w_{t}\right| \geq \left| v_{t+1}\right| - \left|w_{t} \right| -1\;.$$
We consider two cases. Firstly, assume we are in the event $\{v_{t+1} \neq P(v_t)\} \cup \{w_t \neq v_t\}$. Under this event note that $v_{t+1} \in ST(w_t)$. It follows
\begin{align*}
D_t &= \left|v_{t}\right| - \left|w_{t}\right|  + |v_{\Delta}|-|w_t| \\
& \geq \left|v_{t+1}\right| - \left|w_{t}\right|  + |v_{\Delta}|-|w_t| -1\\
& \geq  \left|v_{t+1}\right| - \left|w_{t+1} \right|  +  |v_{\Delta}| - |w_{t+1}| -1\\
&= D_{t+1} - 1\;,
\end{align*}
where the third line comes from the definition of $w_{t+1}$, see~\eqref{eq:w}.\\
In the case where $w_t = v_t$ and $v_{t+1} = P(v_t)$ note that $w_{t+1} = v_{t+1}$ and, 
\[D_{t+1} = \left|v_{\Delta}\right| - \left|w_{t+1}\right| = \left|v_{\Delta}\right| - \left|w_{t}\right| + 1 = D_t + 1\;.\]
Therefore in all cases we have $D_{t+1} \leq D_t + 1$.
\end{proof} 

\paragraph{Step 3: Actions of the algorithm on $\xi_t$}
\begin{lemma}\label{lem:dxic_mono}
On execution of algorithm \explore for all $t \leq T_1$, on $\xi_t$, we have the following,
\begin{equation*}
    D_{t+1} \leq D_t- 1\;.
\end{equation*}
\end{lemma}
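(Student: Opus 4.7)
The plan is to split on where $v_t$ sits relative to $w_t$ and $Z_{\Dmin}$, and in each case identify which branch of \explore is executed under $\xi_t$. The key preliminary fact I would isolate first is: since every arm satisfies $|\mu_k-\tau|\geq\Dmin$, the event $\xi_t$ forces each $\hat\mu_{k,t}$ to lie on the same (non-strict) side of $\tau$ as $\mu_{v_t(k)}$. Equivalently, $\hat\mu_{l,t}\leq\tau\leq\hat\mu_{r,t}$ if and only if $\mu_{v_t(l)}\leq\tau\leq\mu_{v_t(r)}$, and $\hat\mu_{m,t}\leq\tau$ iff $\mu_{v_t(m)}\leq\tau$.

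First I would dispatch the case $v_t\neq w_t$. By definition of $w_t$, the node $v_t$ does not lie on the root-to-$v_\Delta$ path, so monotonicity places the whole interval $[\mu_{v_t(l)},\mu_{v_t(r)}]$ strictly on one side of $\tau$. The preliminary fact then gives $\tau\notin[\hat\mu_{l,t},\hat\mu_{r,t}]$, triggering the backtrack branch $v_{t+1}=P(v_t)$. Because $v_t\in ST(w_t)\setminus\{w_t\}$, its parent still belongs to $ST(w_t)$, and maximality in \eqref{eq:w} gives $w_{t+1}=w_t$; since $|v_{t+1}|=|v_t|-1$, the claim follows. Next, when $v_t=w_t$ but $v_t\notin Z_{\Dmin}$, the node $v_t$ is an internal node on the path and exactly one of its children continues along it, say $R(v_t)$; then $\mu_{v_t(m)}<\tau\leq\mu_{v_t(r)}$ (strict by $\Dmin>0$), so the preliminary fact yields $\hat\mu_{m,t}\leq\tau\leq\hat\mu_{r,t}$ and $\hat\mu_{l,t}\leq\tau$, forcing \explore to pick $v_{t+1}=R(v_t)$. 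Since $v_{t+1}$ stays on the path, $w_{t+1}=v_{t+1}$, both depths grow by one, and $D_{t+1}=D_t-1$. Finally, when $v_t\in Z_{\Dmin}$, the extension sets $R(v_t)=\tilde v_t$ and the original leaf satisfies $M=L$, so $\mu_{v_t(m)}=\mu_{v_t(l)}<\tau$; the same argument selects the right branch, giving $v_{t+1}=\tilde v_t\in Z_{\Dmin}$ with $w_{t+1}=v_{t+1}$, hence $D_{t+1}=|v_\Delta|-|v_{t+1}|=D_t-1$.

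The main obstacle I expect is the second case: certifying that \explore follows the correct child. This relies crucially on the universal gap bound $\Delta_k\geq\Dmin$ combined with the $\Dmin$-radius concentration in $\xi_t$, which together pin $\hat\mu_{m,t}$ to the correct side of $\tau$; it is precisely why $\xi_t$ is defined with the tight radius $\Dmin$ rather than something coarser. A small technical nuisance is that the inequalities defining $\xi_t$ are non-strict, so a minimal-gap arm could in principle produce $\hat\mu_{k,t}=\tau$; this is a probability-zero event for continuous noise, but one should check that the algorithm's tie-breaking is compatible with the case analysis.
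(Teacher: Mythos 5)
Your argument is correct and takes essentially the same route as the paper's own proof: the same preliminary sign-preservation fact under $\xi_t$ (valid because every gap is at least $\Dmin$), and the same three-way case split on the position of $v_t$ relative to $w_t$ and $Z_{\Dmin}$ (the paper phrases case one as $\tau\notin[\mu_{v_t(l)},\mu_{v_t(r)}]$, which is equivalent to $v_t\neq w_t$ by maximality of $w_t$), with identical conclusions in each case. You even flag the non-strict-inequality tie-breaking subtlety that the paper's proof silently glosses over.
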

\begin{proof} Note that on the favorable event $\xi_t$, we have $\forall j \in \{l,m,r\}$,
\begin{gather}\label{eq:step3_mono1}
\mu_{v_t(j)} \geq \tau \Rightarrow \hat\mu_{j,t} \geq \tau\,, \\
\mu_{v_t(j)} \leq \tau \Rightarrow \hat\mu_{j,t} \leq \tau \, \label{eq:step3_mono2}.
\end{gather} 
We consider the following three cases:
\begin{itemize} 

\item If $\tau \notin \left[\mu_{v_{t}(l)},\, \mu_{v_{t}(r)} \right]$. From~\eqref{eq:step3_mono1} and~\eqref{eq:step3_mono2}, under $\xi_{t}$, we get $\tau \notin \left[\hat{\mu}_{l,t},\hat{\mu}_{r,t}\right]$, and therefore $v_{t+1} = P(v_t)$ and $w_t = w_{t+1}$. Thus thanks to Lemma~\ref{lem:w_unique}, under $\xi_t$,
$$  D_{t+1} = \left|v_{t+1}\right| - \left|w_{t+1}\right| + |v_{\Delta}|-|w_{t+1}|  = \left|v_{t}\right| -1 - \left|w_{t}\right| +|v_{\Delta}|-|w_{t}| = D_t - 1 \,.$$

\item If $\tau \in \left[\mu_{v_{t}(l)},\,\mu_{v_{t}(r)}\right]$ and $v_t \notin Z_{\Dmin}$. Note that in this case $v_t$ can not be a leaf and we just need to go down in the subtree of $v_t$ to find $v_{\Delta}$, id est $w_t = v_t$. Since $v_t\notin Z_{\Dmin}$, without loss of generality, we can assume for example $\mu_{v_t(m)} > \tau$. From~\eqref{eq:step3_mono1} and~\eqref{eq:step3_mono2}, under $\xi$, we then have $\tau\in [\hat{\mu}_{l,t},\,\hat{\mu}_{r,t}]$ and $\hat{\mu}_{m,t} \geq \tau$. Hence algorithm \explore goes to the correct subtree, $v_{t+1} = L(v_t)$. In particular we also have for this node 
$$ \tau \in \left[\mu_{v_{t+1}(l)},\,\mu_{v_{t}(m)}\right] \,,$$
therefore it holds again $w_{t+1} = v_{t+1}$. Thus combining the previous remarks we obtain thanks to Lemma~\ref{lem:w_unique}, under $\xi_t$,
$$ D_{t+1} = |v_{\Delta}| - |w_{t+1}| = |v_{\Delta}|-|w_t|-1 =  D_t - 1 \;.$$

\item If $\tau \in \left[\mu_{v_{t}(l)},\,\mu_{v_{t}(r)} \right]$ and $v_t \in Z_{\Dmin}$. Firstly note that $w_t = v_t$. Now, using the same reasoning as in the previous case, as $\tau \in [\mu_{v_{t}(l)},\,\mu_{v_{t}(r)}]$ we have $v_{t+1} = L(v_t)$ or $v_{t+1} = R(v_t)$. In either case we get $v_{t+1} \in Z_{\Dmin}$ because of \eqref{eq:step3_mono1} and \eqref{eq:step3_mono2}, thus it holds $w_{t+1} = v_{t+1}$. Therefore we have 
$$  D_{t+1} = \left|v_{t+1}\right| - \left|w_{t+1}\right| + |v_{\Delta}|-|w_{t+1}|  =  |v_{\Delta}|-|w_{t}| - 1 = D_t - 1 \,.$$
\end{itemize}

\end{proof}

\paragraph{Step 4: Upper bound on $D_{T_1+1}$}

\begin{lemma}\label{lem:ub_on_Dt} For any execution of algorithm \explore 
\[
D_{T_1+1} \leq 2\sum_{t=1}^{T_1}\ind_{\bxi_t} - \frac{3T_1}{4}\,.
\]
\end{lemma}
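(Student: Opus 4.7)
The plan is to combine the two previous lemmas into a single pointwise drift inequality on $D_t$, then telescope and bound the initial displacement using the tree depth.

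\textbf{Step 1 (Single-step drift).} Lemma~\ref{lem:allit_mono} gives the unconditional bound $D_{t+1} - D_t \leq 1$, while Lemma~\ref{lem:dxic_mono} gives $D_{t+1} - D_t \leq -1$ on the favorable event $\xi_t$. I would combine these into the single pointwise inequality
\[
D_{t+1} - D_t \;\leq\; -1 + 2\,\ind_{\bar\xi_t}\,,
\]
which is just a case analysis on whether $\xi_t$ holds or not.

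\textbf{Step 2 (Telescoping).} Summing the previous inequality over $t = 1,\dots,T_1$ yields
\[
D_{T_1 + 1} \;\leq\; D_1 - T_1 + 2\sum_{t=1}^{T_1} \ind_{\bar\xi_t}\,.
\]

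\textbf{Step 3 (Initial value $D_1$).} Since $v_1 = \root$ has depth $0$, and the only ancestor of $\root$ is itself, the definition~\eqref{eq:w} forces $w_1 = \root$ (so $|w_1| = 0$), hence $D_1 = |v_{\Delta}|$. Since $v_{\Delta}$ belongs to the original (un-extended) binary tree, its depth is bounded by $H = \lfloor \log_2(K)\rfloor + 1$. Using the choice $T_1 = \lceil 6\log(K)\rceil$ from~\eqref{eq:t1t2}, a short calculation gives $|v_\Delta| \leq T_1/4$, and substituting into Step~2 yields the claim
\[
D_{T_1 + 1} \;\leq\; 2\sum_{t=1}^{T_1} \ind_{\bar\xi_t} - \frac{3T_1}{4}\,.
\]

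The conceptual content is entirely in Step~1; Steps~2 and~3 are mechanical. The only mild care is in Step~3, where one must check that $T_1 = \lceil 6\log K \rceil$ indeed dominates four times the tree depth for all $K \geq 3$ (which follows from the constants chosen in the algorithm and the assumption on $T$ in Theorem~\ref{thm:depmon_up}).
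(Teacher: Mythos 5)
Your proof is correct and follows essentially the same route as the paper's: your Step~1 inequality $D_{t+1}-D_t\leq -1+2\,\ind_{\bar\xi_t}$ is identical to the paper's $D_{t+1}\leq D_t+\ind_{\bar\xi_t}-\ind_{\xi_t}$, and the telescoping plus the bound $D_1\leq T_1/4$ via the tree depth and the choice $T_1=\lceil 6\log K\rceil$ is exactly the paper's final step. The only (cosmetic) difference is that you bound $D_1=|v_\Delta|$ by the exact maximum depth $\lfloor\log_2 K\rfloor+1$ where the paper writes $\log_2 K$; both versions rest on the same claim $D_1\leq T_1/4$, so nothing substantive changes.
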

\begin{proof}
Combining Lemma~\ref{lem:allit_mono} and Lemma~\ref{lem:dxic_mono} respectively we have
\begin{equation*}
D_{t+1} \leq D_t+\ind_{\bxi_t}-\ind_{\xi_t}\,.
\end{equation*}
Using this inequality we obtain 
\begin{align*}
D_{T_1+1} &= D_1 + \sum_{t=1}^{T_1} \left(D_{t+1}-D_{t}\right) \\
&\leq D_1 + \sum_{t=1}^{T_1} \big(\ind_{\bxi_t}-\ind_{\xi_t}\big)\\
&\leq D_1+2\sum_{t=1}^{T_1}\ind_{\bxi_{t}} - T_1\\
&\leq 2\sum_{t=1}^{T_1}\ind_{\bxi_t} - \frac{3T_1}{4}\,,
\end{align*}
where we used in the last inequality the fact that $D_1\leq \log_2(K)$ and that $\log_2(K) \leq T_1/4$ by definition of $T_1$ .
\end{proof}

\begin{lemma}\label{lem:chern} 
For $\cmon = 1/48$ and $\cmon' = 12$ it holds 
\[
\PP\left( \sum_{t=1}^{T_1} \ind_{\bar{\xi}_t} \geq \frac{T_1}{4}\right) \leq \exp\!\left(\cmon\frac{-T\Dmin^2}{\;\sigma^2}\right)\,.
\]
\end{lemma}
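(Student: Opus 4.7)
The plan is to view $\sum_{t=1}^{T_1} \ind_{\bar\xi_t}$ as a sum that is stochastically dominated by a binomial, and then apply a standard Chernoff/binomial-tail bound. First, I would establish a per-round conditional bound. Let $\mathcal{F}_{t-1}$ denote the sigma-algebra generated by all samples drawn up to round $t-1$. Since the algorithm's node $v_t$ depends only on past samples, it is $\mathcal{F}_{t-1}$-measurable, and the $T_2$ samples drawn at round $t$ from each arm $v_t(j)$ are conditionally independent with mean $\mu_{v_t(j)}$. Hoeffding's inequality for averages of $\sigma^2$-sub-Gaussian random variables together with a union bound over $j\in\{l,m,r\}$ gives
\[
\PP(\bar\xi_t \mid \mathcal{F}_{t-1}) \;\leq\; 6\exp\!\Bigl(-\frac{T_2 \Delta_{\min}^2}{2\sigma^2}\Bigr) \;=:\; p .
\]

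Next, by the usual coupling-on-uniforms argument, $\sum_{t=1}^{T_1}\ind_{\bar\xi_t}$ is stochastically dominated by a $\mathrm{Binomial}(T_1,p)$ random variable. I would then apply the elementary binomial tail bound
\[
\PP\!\bigl(\mathrm{Bin}(T_1,p)\geq T_1/4\bigr) \;\leq\; \binom{T_1}{\lceil T_1/4\rceil}p^{\lceil T_1/4\rceil}\;\leq\;(4e)^{T_1/4}\,p^{T_1/4}\;=\;(4e\,p)^{T_1/4},
\]
using the standard estimate $\binom{n}{k}\leq (en/k)^k$.

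Substituting the value of $p$ and invoking the lower bound $T_2\geq T/(6T_1)$ (which follows from the hypothesis $T>36\log K$ and the definition $T_1=\lceil 6\log K\rceil$), one gets
\[
(4ep)^{T_1/4}\;\leq\;\exp\!\Bigl(-\frac{T_1 T_2\,\Delta_{\min}^2}{8\sigma^2}\;+\;\frac{T_1}{4}\log(24e)\Bigr),
\]
so the leading term is of order $-T\Delta_{\min}^2/\sigma^2$ while the correction is $O(\log K)$. Finally, I would invoke the reduction stated at the beginning of the proof of Theorem~\ref{thm:depmon_up}, namely $\Delta_{\min}\geq 13\sqrt{\sigma^2\log K/T}$, which forces $T\Delta_{\min}^2/\sigma^2\geq 169\log K$; the additive $O(\log K)$ correction is then dominated by a small fraction of the leading term and can be absorbed into the exponential, yielding the stated bound with $\cmon=1/48$.

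The main obstacle is the bookkeeping of constants: the union-bound factor of $6$ inside $p$, the combinatorial loss $(4e)^{T_1/4}$ in the Chernoff step, and the loss from the floor in $T_2=\lfloor T/(3T_1)\rfloor$ all contribute small $O(\log K)$ corrections in the exponent, and the reduction to $\Delta_{\min}\geq 13\sqrt{\sigma^2\log K/T}$ has to be tight enough to swallow all of them while preserving the advertised constant $\cmon=1/48$. No deep probabilistic tool is required beyond sub-Gaussian concentration and the binomial Chernoff bound; the delicate part is purely the arithmetic of these absorbed terms.
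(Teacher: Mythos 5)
Your proposal follows the same skeleton as the paper's proof: the identical per-round conditional bound $\PP(\bar\xi_t\mid\cF_{t-1})\leq p_0:=6\exp(-T_2\Dmin^2/(2\sigma^2))$ via sub-Gaussian concentration and a union bound over the three arms of $v_t$, the same use of the reduction $\Dmin\geq \cmin\sqrt{\sigma^2\log(K)/T}$, and the same final arithmetic through $T_1T_2\gtrsim T/6$. The only genuine difference is the middle concentration step: you pass to a $\mathrm{Binomial}(T_1,p_0)$ by stochastic domination and invoke the counting bound $\binom{T_1}{k}p_0^{k}\leq (eT_1p_0/k)^{k}$, whereas the paper runs an explicit moment-generating-function induction along the filtration, optimizes the Chernoff parameter to obtain $e^{-T_1\kl(1/4,p_0)}$, and then lower-bounds the Kullback--Leibler divergence with the elementary inequality $\kl(p,q)\geq p\log(1/q)-\log 2$. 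Both routes are valid and yield a bound of the form $(C p_0)^{T_1/4}$ (with $C=4e$ for you versus $C=16$ for the paper), so the resulting $O(\log K)$ corrections are of the same size; your domination argument is arguably the more transparent of the two.

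One caveat on the last step: the claim that the additive correction is ``dominated by a small fraction of the leading term'' and can be fully absorbed does not survive the arithmetic. The correction $\tfrac{T_1}{4}\log(24e)\approx 6.3\log K$ exceeds the available slack, since the leading term is only $T_1T_2\Dmin^2/(8\sigma^2)\geq T\Dmin^2/(48\sigma^2)$ and the reduction gives $T\Dmin^2/\sigma^2\geq 169\log K$, hence a leading term of roughly $3.5\log K$ at the boundary. The correct conclusion --- and what the paper's own proof actually establishes, and what is used downstream when combining with Lemma~\ref{lem:ub_on_Dt} --- retains an additive $+\cmon'\log(K)$ with $\cmon'=12$ in the exponent rather than absorbing it; the display in the lemma statement omitting this term appears to be a typo, given that it defines $\cmon'$. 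With that adjustment your argument goes through.
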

\begin{proof}
Let $\cF_t$ be the information available at and including step $t$ of algorithm \explore. Thanks to the Chernoff inequality and the choice of $T_2$, we have for all $j\in \{l,m,r\}$,
\begin{align*} 
\mathbb{P}\left( \left|\hat{\mu}_{j,t} - \mu_{v_t(j)}\right| \geq  \Dmin |\cF_{t-1} \right) &\leq 2\exp\!\left(-\frac{T_2\Dmin^2}{2\sigma^2}\right)\\
&\leq 2\exp\!\left(-\cmin^2 \frac{\log(K)}{36\log(K)+6}\right)\\
&\leq \frac{1}{24} 
\end{align*} 
as we assume $ \Dmin > \cmin\sqrt{\frac{\sigma^2 \log K}{T}}$ and $\cmin \geq 13$. Therefore by a union bound \begin{equation}
\label{eq:pt_upper_bound}
p_t := \PP(\bar{\xi}_t|\cF_{t-1})\leq p_0:=6\exp\!\left(-\frac{T_2\Dmin^2}{2\sigma^2}\right) \leq \frac{1}{8}.
\end{equation}

We will apply the Chernoff inequality to upper bound the sum of indicator function.  Thanks to the Markov inequality for $\lambda\geq 0$ we have 
\begin{align}
    \PP\left(\sum_{t=1}^{T_1} \ind_{\bar{\xi}_t} \geq \frac{T_1}{4}\right)
    &\leq \EE\left[\exp\!\left(\lambda\sum_{t=1}^{T_1} \ind_{\bar{\xi}_t}\right)\right]e^{-\lambda\frac{T_1}{4}}\,.\label{eq:markov_mono}
\end{align}
Let $\phi_p(\lambda) = \log(1-p+p e^\lambda)$ be the log-partition function of a Bernoulli of parameter $p\in [0,1]$. Note that for $\lambda\geq 0$, since $p\mapsto \phi_p(\lambda)$ is non-decreasing and because of \eqref{eq:pt_upper_bound} it holds $\phi_{p_t}(\lambda) \leq  \phi_{p_0}(\lambda)$ for all $t$. Thus by induction we have 
\begin{align*}
\EE\left[\exp\!\left(\lambda\sum_{t=1}^{T_1} \ind_{\bar{\xi}_t}\right)\right] &= \EE\left[\EE\left[\exp\!\left(\lambda\ind_{\bar{\xi}_t}\right)|\cF_{T_1-1}\right]\exp\!\left(\lambda\sum_{t=1}^{T_1-1} \ind_{\bar{\xi}_s}\right)\right]\\
&= \EE\left[ e^{\phi_{p_{T_1}}(\lambda)}\exp\!\left(\lambda\sum_{t=1}^{T_1-1} \ind_{\bar{\xi}_t}\right)\right] \leq e^{\phi_{p_0}(\lambda)} \EE\left[ \exp\!\left(\lambda\sum_{s=1}^{t-1} \ind_{\bar{\xi}_t}\right)\right]\\
&\leq \EE\left[e^{T_1 \phi_{p_0}(\lambda)}\right]\,.
\end{align*}
Then going back to \eqref{eq:markov_mono}  and using that $\sup_{\lambda\geq 0} \lambda q -\phi_p(\lambda) = \kl(q,p)$ when $q\geq p$ we get 
\begin{align*}
    \PP\left(\sum_{t=1}^{T_1} \ind_{\bar{\xi}_t} \geq \frac{T_1}{4}\right) \leq \exp\!\left(-T_1\sup_{\lambda\geq 0}\left(\lambda \frac{1}{4} - \phi_{p_0}(\lambda)\right)\right)=e^{-T_1 \kl(1/4,p_0)}\,.
\end{align*}
It remains to conclude with \eqref{eq:fano}
\begin{align*}
    T_1\kl(1/4,p_0) &\geq T_1\frac{1}{4}\log\left(1/p_0\right)-T_1 \log(2)\\
    &\geq \frac{1}{8} \frac{T_1 T_2 \bD_{\min}^2}{\sigma^2}-T_1\left(\log(2)+\frac{1}{4}\log(3)\right)\\
    &\geq \frac{1}{48} \frac{T \bD_{\min}^2}{\sigma^2}-T_1\left( \log(2)+\frac{1}{4}\log(3)\right)\\
    &\geq \cmon  \frac{T \bD_{\min}^2}{\sigma^2} -\cmon'\log(K)
\end{align*}
where $\cmon = 1/48$ and $\cmon' = 12$.



\end{proof}

By combination of Lemmas \ref{lem:ub_on_Dt} and \ref{lem:chern} we have that $D_{T_1+1} \leq 0$ with probability greater than $1 -\exp\left(-\cmon \frac{T\Dmin^2}{\sigma^2} + \cmon'\log(K)\right)$. Thus with said probability we output an arm $\hk$ such that  $ \tau \in [\mu_{\hk},\mu_{\hk+1}]$.

\section{Proofs relating to the concave setting}
\label{app:proof_concave}

Before proceeding with the proof of Theorem \ref{thm:depcon_down} we present the following structural lemma.
\begin{lemma}
\label{lem:alternative_lb_ccv}
Let $\Delta \in \DeltaBc$ and let $(\mu_k)_k$ be an associated concave sequence of means. There exists a sequence of means $(\mu'_k)_k$ - with associated gaps $\Delta' = |\mu' - \tau|$ - such that
\begin{enumerate}
    \item[(a)] $(\mu'_k)_k$ is concave.
    \item[(b)] $\mu$ and $\mu'$ have not all the arms classified in the same way:
    \[\exists k \in [K] : \mathrm{sign} (\mu_k -\tau) \neq \mathrm{sign} (\mu_k -\tau) \,.\]
    \item[(c)] For all $k \in [K]$ it holds that
    $$|\mu_k' - \mu_k| \leq 3\Dmin.$$
    \item[(d)] For all $k \in [K]$ it holds that
    $$ \frac{\Delta_k}{10}\leq \Delta_k' \leq 3\Delta_k.$$
\end{enumerate}
\end{lemma}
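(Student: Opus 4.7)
The plan is to construct $\mu'$ as a simple constant vertical translation of $\mu$. Let $i^{\star} := \argmin_k \Delta_k$, write $\Delta_{\min} := \Delta_{i^{\star}}$, and put $s := \mathrm{sign}(\mu_{i^{\star}} - \tau)$, so that $\mu_{i^{\star}} = \tau + s\Delta_{\min}$. My candidate is
\[
\mu'_k \;:=\; \mu_k - 2s\Delta_{\min}, \qquad k \in [K].
\]
Properties (a)--(c) are immediate: (a) holds because a constant translation preserves concavity; (b) holds because $\mu'_{i^{\star}} - \tau = -s\Delta_{\min}$ has sign opposite to $\mu_{i^{\star}} - \tau$, so arm $i^{\star}$ is classified differently under $\mu$ and $\mu'$; and (c) holds trivially since $|\mu'_k - \mu_k| = 2\Delta_{\min} \leq 3\Delta_{\min}$.

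The substance lies in (d), which I would verify by a three-way case split on arm $k$. If $\mathrm{sign}(\mu_k - \tau) = -s$, the shift moves $\mu_k$ further away from $\tau$, so $\Delta'_k = \Delta_k + 2\Delta_{\min}$; since $\Delta_k \geq \Delta_{\min}$, this lies in $[\Delta_k, 3\Delta_k]$ and is in particular at least $\Delta_k/10$. If $\mathrm{sign}(\mu_k - \tau) = s$ and $\Delta_k \geq 2\Delta_{\min}$, arm $k$ stays on the same side with $\Delta'_k = \Delta_k - 2\Delta_{\min}$, and the lower bound of (d) then requires $\Delta_k \geq \tfrac{20}{9}\Delta_{\min}$. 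If $\mathrm{sign}(\mu_k - \tau) = s$ and $\Delta_k < 2\Delta_{\min}$, arm $k$ crosses $\tau$ with $\Delta'_k = 2\Delta_{\min} - \Delta_k$, and the lower bound requires $\Delta_k \leq \tfrac{20}{11}\Delta_{\min}$. The upper bound $\Delta'_k \leq 3\Delta_k$ goes through in every case using $\Delta_k \geq \Delta_{\min}$.

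The argument is therefore complete unless some arm has its gap inside the narrow transition zone $(\tfrac{20}{11}\Delta_{\min}, \tfrac{20}{9}\Delta_{\min})$. To cover that case I would refine the construction by letting the shift constant be an arbitrary $c \in [\tfrac{11}{10}\Delta_{\min}, 3\Delta_{\min}]$ rather than fixing it at $2\Delta_{\min}$, and pick $c$ so as to avoid the union over same-side arms $k$ of the forbidden intervals $(\tfrac{9}{10}\Delta_k, \tfrac{11}{10}\Delta_k)$. The feasible range for $c$ has width $\tfrac{19}{10}\Delta_{\min}$, and I would exploit the concavity of $\mu$ (specifically the monotonicity of first differences $\mu_{k+1}-\mu_k$) to control how densely the relevant $\Delta_k$ can cluster near $\Delta_{\min}$, thereby ruling out a total covering of the feasible range by forbidden intervals.

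The main obstacle, and the place where the specific constants $1/10$ and $3$ of the statement originate, is precisely this last density argument: bounding how tightly same-side gaps can concentrate in a narrow window around $2\Delta_{\min}$ in a concave sequence. Everything else reduces to elementary algebra once the correct shift $c$ has been selected.
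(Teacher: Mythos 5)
Your construction handles (a)--(c) and the generic part of (d) correctly, and it matches the paper's strategy in its easy cases, but the step you defer to the end --- choosing a single uniform shift $c$ and arguing from concavity that the forbidden intervals $\left(\tfrac{9}{10}\Delta_k,\tfrac{11}{10}\Delta_k\right)$ cannot cover the feasible range $\left[\tfrac{11}{10}\Dmin,3\Dmin\right]$ --- is not just the hard part, it is false. Concavity gives no \emph{lower} bound on the increments $\mu_{k+1}-\mu_k$ for arms above the threshold, so same-side gaps can cluster as densely as one likes there. Concretely, take $\tau=0$ and the linear (hence concave) sequence $\mu_k=\Dmin\bigl(1+\tfrac{k-1}{10}\bigr)$ for $k\in[K]$ with $K\geq 25$; this lies in $\DeltaBc$, every arm is above threshold, and the gaps $\Dmin,\,1.1\Dmin,\,1.2\Dmin,\dots$ are spaced $\Dmin/10$ apart. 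Any admissible uniform translation must be downward with $c\geq\tfrac{11}{10}\Dmin$ (otherwise either no arm flips, violating (b), or the flipped minimal arm gets $\Delta_1'=c-\Dmin<\Dmin/10$, violating (d)) and $c\leq 3\Dmin$ by (c); but for every such $c$ there is an arm $j$ with $|\Delta_j-c|\leq\Dmin/20<\Delta_j/10$, so (d) fails. Hence no uniform vertical translation proves the lemma for this instance, and the density argument you are counting on cannot be rescued.

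The paper avoids this by abandoning uniformity precisely in the flat regime: when $\Delta_{k_L+2}\land\Delta_{k_R-2}\leq 3\Dmin/2$ it moves \emph{only} the minimal-gap arm (which sits at the edge of the above-threshold block) down by $\tfrac{9}{8}\Delta_{k_L+1}$ while translating every other arm by just $\Delta_{k_L+1}/2$; the other arms then lose at most half of their gaps, and concavity of the perturbed sequence must be re-verified by hand since the shift is no longer constant. Your proof needs an analogous non-uniform case, together with that concavity check, to be complete. In the remaining cases your plan is essentially the paper's: when the minimal-gap arm is below threshold, concavity forces every below-threshold arm other than the two edge arms $k_L,k_R$ to satisfy $\tau-\mu_k\geq 3\Dmin$, so only one arm can be problematic and a two-point choice of the shift constant (the paper uses $2\Dmin$ versus $\tfrac{5}{4}\Dmin$) suffices, which is a special case of your ``vary $c$'' idea.
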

\begin{proof}
Let $k^* \in\argmin_{k\in [K]} \Delta_k$. 
We proceed in two cases: either this arm is up threshold, or it is below threshold. In everything that follows we set $\Dmin := \min_{k\in[K]} \Delta_k$.

\paragraph{Case 1: Arm below threshold, i.e.~$\mu_{k^*} \leq \tau$.}

Let us write $k_L, k_R$ for the two arms that are `just' below threshold, i.e.~such that $\mu_{k_L} \leq \tau \leq \mu_{k_L+1}$ and $\mu_{k_R} \leq \tau \leq \mu_{k_R-1}$. These two arms can be defined without loss of generality since there is at least one arm above threshold, and since we can always take two virtual means $\mu_0$, $\mu_{K+1}$ at $-\infty$ on the boundaries. 

In the context where $\mu_{k^*} \leq \tau$ it is clear that we can pick $k^* \in \{k_L, k_R\}$ and so let us assume w.l.g.~that $k^* = k_L$.

In this case, we define $\mu'$ either:
\begin{itemize}
    \item if $\Delta_{k_R} \leq 3\Dmin/2$, for all $k\in[K]$, 
    $$\mu'_k = \mu_k + 2\Dmin.$$
    \item if $\Delta_{k_R} \geq 3\Dmin/2$, for all $k\in[K]$, 
    $$\mu'_k = \mu_k + 5\Dmin/4.$$
\end{itemize}
(a) holds as we just translated vertically the concave means. Also (b) holds since we switched the sign of arm $k^*$ by construction. (c) holds also  since we precisely added at most $2\Dmin$ to the means. And finally for (d): we have for any $k\in [K]$ that $|\bD_k - \Delta'_k| \leq 2 \Dmin$, so that
$$\Delta'_k \leq 3\Delta_k.$$
Moreover for all arms $k$ above threshold, it is clear that $\Delta_k' \geq \Delta_k$. On the other hand, for any arm $k$ below threshold and that are not next to an arm up threshold - i.e.~not $k_L$ or $k_R$ - we have by concavity that
$$\tau -\mu_{k} \geq 3\Dmin,$$
which implies
$$\tau -\mu'_{k} \geq \tau -\mu_{k} - 2\Dmin \geq \frac{\tau -\mu_{k}}{3},$$
i.e. $\Delta'_k \geq \Delta_k/3$. Finally for $\{k_L, k_R\}$: it is clear that $\Delta'_{k^*} \geq \Delta_{k^*}/4$ by construction so that $\Delta'_{k_L} \geq \Delta_{k_L}/4$. And also by construction:
\begin{itemize}
    \item if $\Delta_{k_R} \leq 3\Dmin/2$, then $\Delta'_{k_R} \geq \Dmin/2 \geq \Delta_{k_R}/3$.
    \item if $\Delta_{k_R} \geq 3\Dmin/2$, then $\Delta'_{k_R} \geq \Delta_{k_R} -  5\Dmin/4 \geq \Delta_{k_R}/6$.
\end{itemize}
So that in both situations (d) holds.


\paragraph{Case 2: Arm above threshold, i.e.~$\mu_{k^*} \geq \tau$.}

Note first that if $k^*$ is the only arm above threshold, we simply set for any $k$
$$\mu'_k = \mu_k - 2\Dmin,$$
and this satisfies the requirements (a)-(d). Assume now that this case does not hold, so that $k^* \in\{k_L+1, k_R-1\}$ and $k_L+1 < k_R - 1$.
We now again consider several cases. Note that in any case $k^*\in \{k_L+1, k_R-1\}$.

\noindent
\textit{Sub-case 1: $\mu$ not too flat around the threshold.} Assume first that $\Delta_{k_L+2} \land \Delta_{k_R-2} \geq 3\Dmin/2$. Assume w.l.o.g.~that $k^* = k_L+1$. In this sub-case we define $\mu'$ either as:
\begin{itemize}
    \item if $\Delta_{k_R-1} \geq 5\Dmin/4$ set
$$\mu' = \mu - 9\Dmin/8,$$
\item otherwise if $\Delta_{k_R-1} \leq 5\Dmin/4$ set
$$\mu' = \mu - 11\Dmin/8.$$
\end{itemize}
It is clear that (a) holds (vertical translation of a concave sequence), (b) holds (arm $k^*$ changes sides of threshold) and (c) holds since we translate at most by $11\Dmin/8$. Now for (d): it is clear in both cases that $\Delta_k' \leq \Delta_k + 11\Dmin/8 \leq 3\Delta_k$. Moreover:
\begin{itemize}
    \item if $\Delta_{k_R-1} \geq 5\Dmin/4$, then for all $k \neq k^*$, we have $\Delta_k' \geq \Delta_k - 9\Dmin/8 \geq \Delta_k/8$ - and also by definition $\Delta_{k^*} = \Delta_{k^*}/8$. And so (d) holds in this case.
    \item if $\Delta_{k_R-1} \leq 5\Dmin/4$ we have for all $k$ such that $\mu_k \leq \tau$ that $\Delta_k' \geq \Delta_k$, and for any $k\in \{k_L+2, ..., k_R-2\}$ that $\Delta_k' \geq \Delta_k - 11\Dmin/8 \geq \Delta_k/8$ since for such $k$ we have $\Delta_k \geq 3\Dmin/2$. Also $\Delta_{k_L+1}' \geq \Delta_{k_R-1}' \geq \Dmin/8 \geq \Delta_{k_R-1}/10\geq \Delta_{k_L+1}/10$. And so (d) holds in this case.
\end{itemize}

\noindent
\textit{Sub-case 2: $\mu$ quite flat around the threshold.} Assume now that $\Delta_{k_L+2} \land \Delta_{k_R-2} \leq 3\Dmin/2$. Assume w.l.o.g.~that $\Delta_{k_L+2} \leq 3\Dmin/2$ and set
$$\mu'_{k_L+1} = \mu_{k_L+1} - 9\Delta_{k_L+1}/8.$$
and for $k\neq k_L+1$
$$\mu_k' = \mu_k - \Delta_{k_L+1}/2.$$ 
(b) holds since $\mu'_{k_L+1} \leq \tau \leq \mu_{k_L+1}$. Since $\Delta_{k_L+1} \leq \Delta_{k_L+2} \leq 3\Dmin/2$, we know that (c) and (d) hold. Finally note that 
\begin{align*}
    \mu_{k_L+1} - \mu_{k_L} \geq \mu_{k_L+1}' - \mu_{k_L}' = 
    3\Delta_{k_L+1}/8 + \Delta_k\\
    \geq \mu_{k_L+2}' - \mu_{k_L+1}' +  5\Delta_{k_L+1}/8 =
 \mu_{k_L+2}' - \mu_{k_L+1}' \geq \mu_{k_L+2} - \mu_{k_L+1},
\end{align*}
since $\mu_{k_L+2}' - \mu_{k_L+1}' \leq \Dmin/2$ - since $\Delta_{k_L+1} \leq \Delta_{k_L+2} \leq 3\Dmin/2$ - so that $\Delta_k \geq \Dmin \geq  \mu_{k_L+2}' - \mu_{k_L+1}' +  \Delta_{k_L+1}/4$. So (a) holds since for any $k \not\in \{k_L+1, k_L+2\}$, we have $\mu_k - \mu_{k-1} = \mu_k' - \mu_{k-1}' $.

\end{proof}

\noindent
\begin{proof}[Proof of Theorem \ref{thm:depcon_down}]\label{proof:depcon_down}
Consider $\bD \in \DeltaBc$ associated with the vector of means $(\mu_k)_{k\in [K]}$. We define $\unu$ as the Gaussian bandit problem with these means, that is, $\nu_k = \cN(\mu_k,\sigma^2)$ for all $k\in[K]$. Thanks to Lemma~\ref{lem:alternative_lb_ccv} there exists a vector of means $(\mu_k')_{k\in [K]}$  that verifies the conditions of Lemma~\ref{lem:alternative_lb_ccv}. We denote by $\unu'$ the Gaussian bandit problem such that $\nu_k' = \cN(\mu_k,\sigma^2)$ for all $k\in[K]$. Thanks to (a) and (d) we know that $\unu'\in\Bc$. Thanks to (b) there exists $i\in[K]$ such that, for example, $\mu_i > \tau$ and $\mu'_a <\tau$. In particular we can lower bound the error by the probability to make a mistake in the prediction of the label of arm~$i$
\[
e_T^{\unu} \geq \PP_{\unu}(\hQ_i = -1) \qquad e_T^{\unu'} \geq \PP_{\unu}(\hQ_i = 1)\,.
\]

We then conclude as in the proof of Theorem~\ref{thm:depmon_down}. We can assume that $\PP_{\unu}(\hQ_i = -1) \leq 1/2$ otherwise the bound is trivially true. Thanks to (c), the chain rule, the contraction of the Kullback-Leibler divergence and \eqref{eq:fano}, it holds \begin{align*}
    T\frac{9\Dm^2}{2\sigma^2} &\geq \KL(\PP_{\unu}^{I_T}, \PP_{\unu'}^{I_T})\\
    & \geq \kl\big(\PP_{\unu}(\hQ_i = 1),\PP_{\unu'}(\hQ_i = 1)\big)\\
    &\geq \PP_{\unu}(\hQ_i = 1) \log\!\!\left(\frac{1}{\PP_{\unu'}(\hQ_i = 1)}\right) -\log(2)\,,
\end{align*}
where we denote by $\PP_{\unu}^{I_T}$ the probability distribution of the history $I_T$ under the bandit problem $\unu$. Thus, using that $\PP_{\unu}(\hQ_i = 1) = 1-\PP_{\unu}(\hQ_i = -1) \geq 1/2$ we obtain 
\[
\PP_{\unu'}(\hQ_i = 1) \geq \frac{1}{4}\exp\!\left(-9\frac{ T\Dm^2}{\sigma^2}\right)\,.
\]
Which allows us to conclude that 
\[
\max(e_T^{\unu^+},e_T^{\unu^-}) \geq \frac{1}{4}\exp\!\left(-9\frac{T\Dm^2}{\sigma^2}\right)\,.
\]




 
\end{proof}

\paragraph{Proof of Theorem \ref{thm:depcon_up}.}\label{proof:depcon_up}
We assume in the proof, without loss of generality, that \[
\Dmin \geq \cconmin \sqrt{\frac{\sigma^2\log(K)}{T}}
\]
    with $\cconmin = 8064$. Indeed, otherwise, the bound of Theorem~\ref{thm:depcon_up} is trivially true. 

The proof of Theorem \ref{thm:depcon_up} is structured in the following manner. In our \emph{original} binary tree we assume there is at least one arm above threshold, the contrary case is dealt with separately, see Lemma \ref{lem:armsbellow}. We wish to show that with high probability the \gradexplore algorithm will add sufficient arms above threshold to the list $S_{T_1}$ such that when we take it's median we are guaranteed to output an arm above threshold. At time $t$ we say our algorithm makes a favourable decision if all sample means are well concentrated - that it with $\Dm$ of their true mean. It is important to note that for arms below threshold this also implies the estimated gradients are close to their true values. On such a favourable decision we show that the explore algorithm will make a step towards the subtree of nodes containing an arm above threshold, or remain inside if it is already in it. We also show that upon encountering an arm above threshold, on a good decision said arm is always added to $S_{T_1}$. Therefore if overall we can make sufficient proportion of favourable events we are guaranteed to have a sufficient number of arms above threshold in $S_{T_1}$. We then show that this favorable event holds with high probability. Once we have identified an arm above threshold the problem is essentially split into two monotone problems - see Remark~\ref{rem:relaxing_monotone}, where the point the arms cross threshold on either side can be found by applying the \dexplore and \explore algorithms in opposite directions.

\paragraph{Step 1: Initial definitions and lemmas} 
We thus assume first that there is an arm $k*$ such that $\mu_{k^*} >\tau$.
\begin{definition}
We define the subtree $ST(v)$ of a node $v$ recursively as follows: $v \in ST(v)$ and
$$\forall \; q \in ST(v), \; L(q), R(q) \in ST(v)\;.$$
\end{definition} 

\begin{definition}\label{def:consecutive} 
A consecutive tree $U$ with root $u_\root$ is a set of nodes such that $u_{\root} \in U$ and 

\[\forall v \in U : v \neq u_{\root},\, P(v) \in U. \]
with the additional condition, 
\[\root \in U \Rightarrow u_{\root} = \root\]
where $\root$ is the root of the entire binary tree.

\end{definition}
We define $Z$, the set of good nodes with at least an arm with a mean above the threshold,
\begin{equation*}
Z:= \{ v : \exists j \in \{l,m,r\} : \mu_{v(j)} >\tau \}\;.
\end{equation*} 
At a given time $t$ note the following property of $Z$ and $v_t$, 
\begin{equation}\label{eq:zk*}
    ST(v_t) \cap Z \neq \emptyset \Leftrightarrow k^* \in [v_t(l),v_t(r)]\,.
\end{equation}

\begin{proposition}
\label{prop:Z}
$Z$ is a consecutive tree with root $z_{\mathrm{root}}$ the unique element $v \in Z$ such that $P(v) \notin Z$ if there exists at least one, otherwise $z_{\mathrm{root}} = \root$.
\end{proposition}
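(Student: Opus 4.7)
The plan is to verify the two axioms of a consecutive tree from Definition~\ref{def:consecutive}: exhibit a candidate root $z_{\root} \in Z$ and check that every other element of $Z$ has its parent in $Z$. The crucial structural input is that, by concavity of $(\mu_k)$, the set of arms above threshold is a (possibly empty) consecutive interval $[k_L, k_R]$.

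First I will analyze ``problematic'' nodes, i.e.\ $v \in Z$ with $v \neq \root$ and $P(v) \notin Z$. Say $v$ is a left child of $P(v) = \{L, M, R\}$, so $v = \{L, M_l, M\}$; since $P(v) \notin Z$ one has $\mu_L, \mu_M, \mu_R \leq \tau$, hence the arm witnessing $v \in Z$ must be $M_l$, which pins the block $[k_L, k_R]$ strictly inside $(L, M)$. Concavity combined with $\mu_{M_l} > \tau \geq \mu_M$ then forces $\mu$ to be non-increasing on $[M, K]$ and non-decreasing on $[1, L]$, so $\mu_p \leq \tau$ for every $p$ outside $(L, M)$. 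The right-child case is symmetric, with the above-threshold block pinned inside $(M, R)$.

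Next I will show that every ancestor $u$ of a problematic $v^*$ lies outside $Z$. The mechanism is combinatorial: the binary-tree construction guarantees that each of $u(l), u(m), u(r)$ is either an endpoint of $v^*$ itself or an endpoint of some intermediate ancestor on the chain from $v^*$ to $u$, and all such labels therefore lie in $(-\infty, L] \cup [R, +\infty)$. The preceding step then gives $\mu_{u(j)} \leq \tau$ for $j \in \{l, m, r\}$, whence $u \notin Z$. Two consequences follow at once: if $\root \in Z$ then no problematic node exists; and two problematic nodes cannot stand in an ancestor--descendant relation.

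Finally I will rule out the remaining configuration of two distinct problematic nodes $v_1, v_2$ lying in different branches. Then their intervals share at most one boundary point, yet each must contain $[k_L, k_R]$ in its open interior by the first step, a contradiction. Hence at most one problematic node exists. Combining all of the above: if $\root \in Z$ take $z_{\root} = \root$; otherwise, walking up the parents from any $v \in Z$ cannot terminate at $\root$ and therefore ends at the unique problematic node, which we take as $z_{\root}$. In both cases $z_{\root} \in Z$ and every $v \in Z \setminus \{z_{\root}\}$ is non-problematic and hence has $P(v) \in Z$, matching Definition~\ref{def:consecutive}. The delicate step is the ancestor-label propagation, since one must carefully track how midpoints of ancestor nodes coincide with endpoints of their children along the unique chain from $v^*$ to the root; once this combinatorial fact is in hand, the remainder is a disciplined case analysis driven by the unimodality enforced by concavity.
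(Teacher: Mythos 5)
Your proof is correct and follows essentially the same route as the paper's: both identify that a node $v \in Z$ with $P(v) \notin Z$ must have its middle arm above threshold and its two endpoints below, and both derive uniqueness of such a node from the fact that concavity forces the above-threshold arms to form an interval pinned inside the open interval $(v(l),v(r))$. Your version is in fact somewhat more careful, since the ancestor label-propagation step explicitly rules out two such nodes in an ancestor--descendant configuration (and settles the interaction with the case $\root \in Z$), a configuration the paper's ordering chain $v(r) \leq v(m) \leq q(l) \leq q(m)$ only treats implicitly.
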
 
\begin{proof}
First, if for all $v\in Z$ we have $P(v) \in Z$ then $\root \in Z$ and $Z$ is a consecutive tree with root $z_{\mathrm{root}}=\root$. Otherwise,
consider $v \in Z$, such that $P(v) \notin Z$, there is at least one such node. We first prove that $v$ is unique. As $v \in Z$ we know that 

\begin{equation}\label{eq:Z}
\exists j \in \{l,m,r\} :  \mu_{v(j)} > \tau \;.
\end{equation} 
Now since $v(l), v(r) \in P(v)$ and $P(v) \notin Z$, it follows that, thanks to \eqref{eq:Z},

$$ \forall k \in \{l,r\} : \mu_{v(k)} < \tau  \;.$$
For node $q \neq v$ satisfying the same properties, assume that $v(m) < q(m)$ without loss of generality. With this assumption we have,

\[v(r) \leq v(m) \leq q(l) \leq q(m)\;,\]
however this then implies $\mu_{q(l)} > \tau$ a contradiction. Hence $v = q$, and thus $v$ is unique which implies $\forall q \in Z:\ q \neq v,\, P(q) \in Z$.
\end{proof}

At time $t$ we define $w_t$ as the node of maximum depth whose sub tree contains both $v_t$ and  $Z$. Formally, for $t \leq T_1$,
\begin{equation}\label{eq:conw}
w_t := \argmax_{\{ST(w) \cap Z \neq \emptyset \; \& \;v_t \in ST(w)\}} |w|\,.
\end{equation} 

\begin{lemma}\label{lem:d}
The node $w_t$ is unique.

\end{lemma}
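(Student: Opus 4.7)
The plan is to mirror the argument used for Lemma~\ref{lem:w_unique}, since the defining condition of $w_t$ here has exactly the same logical shape: it is the deepest node whose subtree contains both a distinguished region ($Z$ instead of $Z_{\Dmin}$) and the current walker position $v_t$. Uniqueness therefore should follow purely from the structure of the rooted binary tree, independently of what $Z$ actually is.

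First I would unfold the definition: suppose $w_t$ and $q_t$ are two nodes both realizing the maximum in~\eqref{eq:conw}. By assumption $v_t \in ST(w_t)$ and $v_t \in ST(q_t)$, so $v_t$ lies in both subtrees. In a rooted binary tree, two subtrees sharing a common node must be nested, i.e.\ the root of one is an ancestor of the root of the other (both $w_t$ and $q_t$ lie on the unique path from $\root$ down to $v_t$). Hence, without loss of generality, $q_t \in ST(w_t)$ and $|q_t| \geq |w_t|$.

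Next I would use maximality: since $q_t$ satisfies $ST(q_t)\cap Z\neq\emptyset$ and $v_t\in ST(q_t)$, it is a valid candidate in~\eqref{eq:conw}, and $w_t$ attains the maximum depth among all valid candidates; therefore $|q_t|\leq|w_t|$. Combined with the previous inequality this yields $|q_t|=|w_t|$. Together with $q_t\in ST(w_t)$, a node of the same depth that lies in the subtree rooted at $w_t$ can only be $w_t$ itself, so $q_t=w_t$.

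The main (and only) obstacle, if one were to be completely rigorous, is justifying the ``nesting'' step — that in a rooted tree two subtrees with a common element are nested. This is essentially the uniqueness of the path from the root to any node, and is immediate from the fact that each node has a unique parent. No probabilistic or analytic ingredient is needed, and the argument is genuinely structural, exactly as in the monotone case.
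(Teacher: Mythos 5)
Your proof is correct and follows exactly the same route as the paper's: both arguments take a second maximizer $q_t$, use the fact that $v_t \in ST(w_t) \cap ST(q_t)$ forces the two subtrees to be nested so that WLOG $q_t \in ST(w_t)$ with $|q_t| \geq |w_t|$, and then invoke maximality of $|w_t|$ to conclude $q_t = w_t$. Your additional remark that the nesting step rests on uniqueness of the root-to-$v_t$ path is a fair point of rigor that the paper leaves implicit.
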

\begin{proof}
At time $t$ consider, a node $q_t$ which also satisfies \eqref{eq:conw}. As $v_t \in ST(w_t)$ and $v_t \in ST(q_t)$ we can assume without loss of generality $q_t \in ST(w_t)$ with $|q_t| \geq |w_t|$. This then implies, from \eqref{eq:conw}, that $|q_t| = |w_t|$ and as $q_t \in ST(w_t)$, we have $q_t =  w_t$.
\end{proof}

For $t\leq T_1$ we define $D_t$ as the distance from $v_t$ to $Z$, it is taken as the length of the path running from $v_t$ up to $w_t$ and then down to an good node in $Z$. Formally, we have
\[D_t := \left|v_t\right| - \left|w_t\right| + \left(|z_{\root}|-|w_t| \right)^{+}. \]
Note the following properties of $D_t$ and $w_t$,
\begin{gather*}
    ST(v_t) \cap Z \neq \emptyset \Rightarrow v_t = w_t \;,
    D_t = 0 \Rightarrow v_t = w_t \; \text{and} \; w_t,v_t \in Z \;. 
\end{gather*}
Define at time $t$ the counter $G_t$, tracking the number of good arms in $S_t$, 
\begin{equation}\label{eq:gdef}
    G_t := \Big|\big\{k \in S_t:\ \mu_k >\tau \big\}\Big| \;.
\end{equation}
At time $t$ we define the following favorable event where the sampled arms at time $t$ a well concentrated around their means, 
\[\xi_t := \{\forall j\in \{l,l+1,m,m+1,r,r+1\}, \left|\hmu_{j,t} - \mu_{v_{t}(j)}\right| \leq \Dmin\}.\]

\paragraph{Step 2: Actions of the algorithm on all iterations}\label{par:2}

After any execution of algorithm \gradexplore note the following,
\begin{itemize} 
\item for $t \leq T_1$,\; $v_t$ and $v_{t+1}$ are separated by at most one edge, i.e. 
\begin{equation}\label{eq:1edge_concave}
v_{t+1} \in \{L(v_t), R(v_t), P(v_t) \}\,,
\end{equation} 
\item for $t \leq T_1$, 
\begin{equation} \label{eq:S}
|S_t| \leq |S_{t+1}| \leq |S_t| +1 \,. 
\end{equation} 
\end{itemize}

\begin{lemma}\label{lem:allit} 
On execution of algorithm \gradexplore for all $t \leq T_1$ we have the following,
\begin{gather}
    D_{t+1} \leq D_t + 1 \label{eq:dxi}, \\
    G_{t+1} \geq G_t\label{eq:bg} \,.
\end{gather}

\end{lemma}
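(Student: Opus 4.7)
The plan is to mirror the proof of Lemma~\ref{lem:allit_mono} in the monotone setting, with extra care for the $(|z_{\root}|-|w_t|)^+$ term which now appears in the definition of $D_t$. The bound $G_{t+1} \geq G_t$ is essentially immediate: inspecting Algorithm~\ref{alg:gradexplore}, the list $S_t$ is only ever updated by appending an arm (never by deletion), as also reflected in \eqref{eq:S}. Hence the cardinality of the good subset $\{k\in S_t:\mu_k>\tau\}$ in \eqref{eq:gdef} is non-decreasing in $t$.

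For the bound $D_{t+1} \leq D_t + 1$, I would split into two cases that exactly parallel the monotone proof. In the first case, assume we are in the event $\{v_{t+1} \neq P(v_t)\}\cup\{w_t \neq v_t\}$. Then one checks from \eqref{eq:1edge_concave} and the definition of $ST(\cdot)$ that $v_{t+1}\in ST(w_t)$. Since by definition $ST(w_t)\cap Z \neq \emptyset$, the node $w_t$ is itself a valid candidate in the maximisation \eqref{eq:conw} defining $w_{t+1}$, so $|w_{t+1}|\geq |w_t|$. Using $|v_{t+1}|\leq |v_t|+1$ from \eqref{eq:1edge_concave} together with the fact that $x\mapsto (|z_{\root}|-x)^+$ is non-increasing, we obtain
\[
D_{t+1} = |v_{t+1}|-|w_{t+1}|+(|z_{\root}|-|w_{t+1}|)^+ \leq (|v_t|+1)-|w_t|+(|z_{\root}|-|w_t|)^+ = D_t+1.
\]

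In the second case, $v_{t+1}=P(v_t)$ and $w_t=v_t$. Then $ST(v_{t+1})\supset ST(v_t)=ST(w_t)$ so $ST(v_{t+1})\cap Z\neq\emptyset$, which makes $v_{t+1}$ a candidate for $w_{t+1}$; any other candidate must be an ancestor of $v_{t+1}$, so $v_{t+1}$ is the deepest one, i.e.\ $w_{t+1}=v_{t+1}$. Using $|v_{t+1}|=|v_t|-1$ this gives
\[
D_{t+1} = (|z_{\root}|-|v_{t+1}|)^+ = (|z_{\root}|-|v_t|+1)^+ \leq (|z_{\root}|-|v_t|)^+ + 1 = D_t + 1,
\]
where in the last equality we used $w_t=v_t$ so $|v_t|-|w_t|=0$.

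I expect no real obstacle here: the argument is structural and follows the monotone template, with the only new subtlety being to verify that the $(|z_{\root}|-|w_t|)^+$ term behaves well under the move $w_t\to w_{t+1}$. The mildly delicate point is the case $v_{t+1}=P(v_t)$, $w_t=v_t$, where one must explicitly identify $w_{t+1}=v_{t+1}$ using Proposition~\ref{prop:Z} together with $ST(v_{t+1})\supset ST(v_t)$, and then handle the $(\cdot)^+$ truncation when passing from $|v_t|$ to $|v_t|-1$; the triangle-type inequality $(a+1)^+\leq a^+ + 1$ absorbs the difference cleanly.
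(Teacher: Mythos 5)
Your proof is correct and takes essentially the same route as the paper's: the paper compresses the argument for \eqref{eq:dxi} into a single chain of inequalities justified by the definition of $w_{t+1}$, while you reproduce the explicit two-case split of the monotone Lemma~\ref{lem:allit_mono}, and the claim \eqref{eq:bg} is handled identically via \eqref{eq:S}. If anything your version is slightly more careful, since you spell out the inequality $(a+1)^+ \leq a^+ + 1$ needed in the case $v_{t+1}=P(v_t)$, $w_t=v_t$ where $w_{t+1}$ becomes shallower than $w_t$ — the one genuinely delicate step, which the paper leaves implicit.
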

\begin{proof}
As the algorithm moves at most 1 step per iteration, see~\eqref{eq:1edge_concave}, for $t \leq T_1$,  it holds
$$\left| \left|v_{t}\right| - \left|w_{t}\right| \right| \geq \left| \left|v_{t+1}\right| - \left|w_{t} \right| \right| -1\;.$$
Noting that,
\begin{align*}
D_t &= \left| \left|v_{t}\right| - \left|w_{t}\right| \right| + \left(|z_{\root}|-|w_t| \right)^{+} \\
& \geq \left| \left|v_{t+1}\right| - \left|w_{t}\right| \right| + \left(|z_{\root}|-|w_t|\right)^{+} -1\\
& \geq \left| \left|v_{t+1}\right| - \left|w_{t+1} \right| \right| + \left( |z_{\root}| - |w_{t+1}|\right)^{+} -1\\
&= D_{t+1} - 1\;,
\end{align*}
where the third line comes from the definition of $w_{t+1}$, see~\eqref{eq:w}, we obtain $D_{t+1} \leq D_t + 1$.
By~\eqref{eq:S} we have, for $t \leq T_1$, 
$$|S_t| \leq |S_{t+1}| \leq |S_{t}| + 1\,, $$
hence $G_{t+1} \geq G_t$.
\end{proof} 

\paragraph{Step 3: Actions of the algorithm on $\xi_t$}\label{par:3}
We first state several properties relating to the event $\xi_t$.
Firstly for all $t$ we have that under event $\xi_t$,
\begin{equation}\label{eq:ximean}
    \forall k \in \{l,m,r\},\;
    \sign(\hmu_{k,t} - \tau) = \sign(\mu_k - \tau)\;.
\end{equation}
Since there is at least an arm above the threshold, due to the concave property, note the following,
\begin{equation}\label{eq:deltamin}
    \forall k\in[K]:\ \mu_k < \tau,\; |\mu_k - \mu_{k+1}| \geq 2\Dmin\;,
\end{equation}
thus from \eqref{eq:deltamin} for all $t$ under event $\xi_t$, we have that,
\begin{equation}\label{eq:nab}
    \forall j\in \{l,m,r\} : \mu_{v_t(j)} < \tau, \; \sign(\hnabla_{j,t}) = \sign(\nabla_{v_t(j)})\,.
\end{equation}

\begin{lemma}\label{lem:dxic}
On execution of algorithm \gradexplore  for all $t \leq T_1$, on $\xi_t$, we have the following,
\begin{gather}
    D_{t+1} \leq \max(D_t - 1,0)\;,     \label{eq:dxic}\\
    G_{t+1} \geq G_t + \ind_{\{D_t = 0\}}\,. \label{eq:gxic} 
\end{gather}
\end{lemma}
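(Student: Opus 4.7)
The plan is to proceed by a three-way case analysis on the location of $v_t$ relative to the good set $Z$: (i) $v_t\in Z$, (ii) $v_t\notin Z$ but $v_t=w_t$, and (iii) $v_t\neq w_t$. The key preliminary observation is that under $\xi_t$, \eqref{eq:ximean} gives $\hmu_{j,t}>\tau \Leftrightarrow \mu_{v_t(j)}>\tau$, and whenever $v_t\notin Z$ (so all three arms of $v_t$ are below threshold), \eqref{eq:nab}, which in turn relies on the concave-gap bound \eqref{eq:deltamin}, gives $\mathrm{sign}(\hnabla_{j,t})=\mathrm{sign}(\nabla_{v_t(j)})$ for each $j\in\{l,m,r\}$. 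Consequently, under $\xi_t$ the algorithm \gradexplore enters the ``append'' branch precisely when $v_t\in Z$; otherwise it detects an anomaly iff $ST(v_t)\cap Z=\emptyset$ (equivalently $k^\star\notin[v_t(l),v_t(r)]$ by \eqref{eq:zk*}), and otherwise descends to the child whose range still contains $k^\star$. In short, \gradexplore behaves under $\xi_t$ exactly as the noiseless binary search for $k^\star$.

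In Case (i) the ``append'' branch fires: $v_{t+1}=v_t$, so $w_{t+1}=w_t=v_t$ and $D_{t+1}=0=\max(D_t-1,0)$; moreover at least one arm $j$ with $\mu_{v_t(j)}>\tau$ is appended to $S$, giving $G_{t+1}\geq G_t+1=G_t+\ind_{\{D_t=0\}}$. In Case (ii) no append occurs and no anomaly is flagged, so $v_{t+1}$ is the child of $v_t$ whose range still contains $k^\star$; by \eqref{eq:zk*} we get $ST(v_{t+1})\cap Z\neq\emptyset$, hence $v_{t+1}$ is a candidate in the argmax of \eqref{eq:conw} and by Lemma~\ref{lem:d} it must be that $w_{t+1}=v_{t+1}$. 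Since $v_t=w_t\notin Z$ forces $|z_{\root}|>|w_t|$, we get $D_{t+1}=(|z_{\root}|-|v_{t+1}|)^+=D_t-1=\max(D_t-1,0)$, while $G_{t+1}=G_t=G_t+\ind_{\{D_t=0\}}$ since $D_t\geq 1$. In Case (iii) the anomaly is detected and $v_{t+1}=P(v_t)$; because $v_t\neq w_t$, $v_{t+1}$ remains in $ST(w_t)$, so $w_t$ is still a candidate for $w_{t+1}$, and any strictly deeper candidate would contain $v_t$ in its subtree as well, contradicting the maximality defining $w_t$; hence $w_{t+1}=w_t$ and $D_{t+1}=D_t-1$, with $G_{t+1}=G_t=G_t+\ind_{\{D_t=0\}}$.

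The main technical subtlety is the bookkeeping of $w_{t+1}$ in Case (ii), where $v_{t+1}$ need not itself lie in $Z$: to justify that $v_{t+1}$ still qualifies as $w_{t+1}$, one uses Proposition~\ref{prop:Z} by tracing an element of $ST(v_{t+1})\cap Z$ up through its $Z$-ancestors to $z_{\root}$, and then combining with $v_{t+1}\notin Z$ to conclude that $z_{\root}\in ST(v_{t+1})$ with $|z_{\root}|\geq|v_{t+1}|$. Once this structural fact is in hand, the remaining work is a routine check that the gradient-based logic of \gradexplore agrees, on $\xi_t$, with a noiseless binary search for $k^\star$, from which all the displayed inequalities on $D_{t+1}$ and $G_{t+1}$ follow.
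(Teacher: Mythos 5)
Your proposal is correct and follows essentially the same route as the paper: the three cases (i) $v_t\in Z$, (ii) $v_t\notin Z$ with $v_t=w_t$, (iii) $v_t\neq w_t$ coincide (via \eqref{eq:zk*} and the definition of $w_t$) with the paper's split into $v_t\in Z$, $k^*\in ST(v_t)$ with $v_t\notin Z$, and $Z\cap ST(v_t)=\emptyset$, and in each case you invoke the same facts \eqref{eq:ximean}, \eqref{eq:nab} and the consecutive-tree structure of $Z$ to track $w_{t+1}$ and hence $D_{t+1}$ and $G_{t+1}$. The bookkeeping details you flag (e.g.\ $|z_{\root}|>|w_t|$ when $v_t=w_t\notin Z$, and $w_{t+1}=v_{t+1}$ after a correct descent) are exactly the steps the paper relies on via Lemma~\ref{lem:d} and Proposition~\ref{prop:Z}.
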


\begin{proof}
We first prove~\eqref{eq:gxic}. 
If $D_t=0$ then we know $v_t\in Z$. If $v_t \in Z$ then under $\xi_t$ there exists $j\in\{l,m,r\}$ such that $\hmu_{j,t} > \tau$, see \eqref{eq:ximean}, and arm is added to $S_{t+1}$, thus $G_{t+1} \geq G_t + \ind_{\{D_t = 0\}}$.

We now prove~\eqref{eq:dxic}. We consider the following three cases:
\begin{itemize} 

\item If $Z \cap ST(v_t) = \emptyset$. First of all we have that $\forall j \in\{l,m,r\} :\mu_{v_t(j)} \leq 
\tau$. Therefore from \eqref{eq:ximean} the algorithm will not add an arm to $S_t$. Now, we have that $k^* \notin [v_t(l),v_t(r)]$, see \eqref{eq:zk*}, therefore via the concave property $\nabla_{v_t(l)} < 0$ or $\nabla_{v_t(r)} > 0$. Via \eqref{eq:nab} this implies that $\hnabla_{v_t(l)} < 0$ or $\hnabla_{v_t(r)} > 0$ respectively. Thus by action of the algorithm $v_{t+1} = P(v_t)$. Since in this case we are getting closer to the set of good nodes by going up in the tree we know that $w_t = w_{t+1}$. Thus thanks to Lemma~\ref{lem:d}, under $\xi_t$,
$$  D_{t+1} = \left|v_{t+1}\right| - \left|w_{t+1}\right| + \left(|z_{\root}|-|w_{t+1}| \right)^{+} = \left|v_{t}\right| -1 - \left|w_{t}\right| + \left(|z_{\root}|-|w_{t}| \right)^{+} = D_t - 1 \,.$$

\item If $k^* \in ST(v_t)$ and $v_t \notin Z$. First of all we have that $\forall j \in\{l,m,r\} :\mu_{v_t(j)} \leq 
\tau$. Therefore from \eqref{eq:ximean} the algorithm will not add an arm to $S_t$. Now note that in this case $v_t$ can not be a leaf and we just need to go down in the subtree of $v_t$ to find an good node, id est $w_t = v_t$. Since $v_t\notin Z$, without loss of generality, we can assume for example $\hnabla_{t,m}  > 0$. From~\eqref{eq:nab}, under $\xi_t$, we then have that $\nabla_{v_t(m)} > 0$ which implies $k^* \in [v_t(l),v_t(m)]$.
 Hence algorithm \gradexplore goes to the correct subtree, $v_{t+1} = L(v_t)$. In particular we also have for this node 
$$ k^* \in \left[v_{t+1}(l),\,v_{t}(m)\right] \,,$$
therefore it holds again $w_{t+1} = v_{t+1}$. Thus combining the previous remarks we obtain thanks to Lemma~\ref{lem:d}, under $\xi_t$,
$$ D_{t+1} = \left(|w_{t+1}| - |z_{\root}| \right)^{+} = \left(|w_t| - |z_{\root}| \right)^{+}-1 =  D_t - 1 \;.$$

\item If $k^* \in ST(v_t)$ and $v_t \in Z$. In this case there exists $j \in \{l,m,r\}$ such that  $\mu_{v_t(j)} > 
\tau$. From \ref{eq:ximean} we have for said $j$ that, $\hmu_{j,t} > \tau$. Hence the algorithm will not move giving $v_t = v_{t+1}$ thus $D_t =D_{t+1} = 0$.
\end{itemize} 
\end{proof}

\paragraph{Step 4: Lower bound on $G_{T_1+1}$}
We denote by $\bxi_t$ the complement of $\xi_t$.
\begin{lemma}\label{lem:lb_on_Gt} For any execution of algorithm \gradexplore,
\[
G_{T_1+1} \geq \frac{3}{4} T_1 - 2 \sum_{t=1}^{T_1}\ind_{\bxi_t}\,.
\]
\end{lemma}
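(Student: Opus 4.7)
The plan is to combine the two preceding lemmas into per-step recursions for $G_t$ and $D_t$ separately, telescope each of them, and then stitch the two bounds together via the partition of $\ind_{\xi_t}$ according to whether $D_t=0$ or $D_t\geq 1$.

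First I would observe that combining~\eqref{eq:bg} (which holds always) with~\eqref{eq:gxic} (which holds on $\xi_t$) yields the uniform one-step inequality
\[
G_{t+1} \geq G_t + \ind_{\xi_t}\ind_{\{D_t=0\}}.
\]
Telescoping from $t=1$ to $t=T_1$ and using $G_1\geq 0$ gives
\[
G_{T_1+1} \;\geq\; \sum_{t=1}^{T_1}\ind_{\xi_t}\ind_{\{D_t=0\}}.
\]
Similarly, combining~\eqref{eq:dxi} with~\eqref{eq:dxic} produces the per-step inequality
\[
D_{t+1} - D_t \;\leq\; \ind_{\bxi_t} - \ind_{\xi_t}\ind_{\{D_t\geq 1\}},
\]
since on $\xi_t$ with $D_t\geq 1$ we have $D_{t+1}\leq D_t-1$, on $\xi_t$ with $D_t=0$ both sides are $\leq 0$, and on $\bxi_t$ the bound is exactly~\eqref{eq:dxi}.

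Next I would telescope this second recursion; using $D_{T_1+1}\geq 0$ gives
\[
\sum_{t=1}^{T_1}\ind_{\xi_t}\ind_{\{D_t\geq 1\}} \;\leq\; D_1 + \sum_{t=1}^{T_1}\ind_{\bxi_t}.
\]
Now decompose $\ind_{\xi_t} = \ind_{\xi_t}\ind_{\{D_t=0\}} + \ind_{\xi_t}\ind_{\{D_t\geq 1\}}$ and use $\sum_{t=1}^{T_1}\ind_{\xi_t} = T_1 - \sum_{t=1}^{T_1}\ind_{\bxi_t}$ to deduce
\[
\sum_{t=1}^{T_1}\ind_{\xi_t}\ind_{\{D_t=0\}} \;\geq\; T_1 - 2\sum_{t=1}^{T_1}\ind_{\bxi_t} - D_1.
\]

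Finally I would bound $D_1$ by the depth of the binary tree, $D_1 \leq \log_2(K) \leq T_1/4$, where the last inequality uses the definition $T_1=\lceil 6\log(K)\rceil$ (exactly as in the monotone analysis). Substituting back into the earlier telescoped bound on $G_{T_1+1}$ yields the claimed inequality
\[
G_{T_1+1} \;\geq\; \frac{3T_1}{4} - 2\sum_{t=1}^{T_1}\ind_{\bxi_t}.
\]
There is no real obstacle here beyond the clean bookkeeping: the crucial observation is that the two preceding lemmas are complementary—$G$ only grows when $D_t=0$ under $\xi_t$, while $D$ only strictly decreases when $D_t\geq 1$ under $\xi_t$—so partitioning $\ind_{\xi_t}$ by $\{D_t=0\}$ versus $\{D_t\geq 1\}$ allows each good round to be credited exactly once, either to $G$ or to progress towards $Z$.
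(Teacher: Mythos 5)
Your proposal is correct and follows essentially the same route as the paper: the paper combines the two per-step inequalities into the single recursion $D_{t+1}\leq D_t+2\ind_{\bxi_t}-1+\ind_{\xi_t}\ind_{\{D_t=0\}}$ and telescopes once, which is algebraically identical to your decomposition of $\ind_{\xi_t}$ into the $\{D_t=0\}$ and $\{D_t\geq 1\}$ parts followed by two separate telescopes. All your intermediate steps (including the verification of the per-step bound in the three cases, the use of $D_{T_1+1}\geq 0$, and $D_1\leq \log_2(K)\leq T_1/4$) match the paper's argument.
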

\begin{proof}
Combining~\eqref{eq:dxic} and~\eqref{eq:dxi} from Lemma~\ref{lem:allit} and Lemma~\ref{lem:dxic} respectively we have
\begin{align*}
D_{t+1} &\leq D_t+\ind_{\bxi_t}-\ind_{\xi_t}\ind_{\{D_t>0\}}\\
&= D_t+2\ind_{\bxi_t}-1+\ind_{\xi_t}\ind_{\{D_t=0\}}\,.
\end{align*}
Using this inequality with~\eqref{eq:gxic} we obtain 
\begin{align*}
G_{T_1+1} &= \sum_{t=1}^{T_1} G_{t+1}-G_{t} \\
&\geq \sum_{t=1}^{T_1}\ind_{\xi_t}\ind_{\{D_t=0\}}\\
&\geq \sum_{t=1}^{T_1} \big(D_{t+1}-D_t -2\ind_{\bxi_t}+1\big)\\
&\geq T_1-D_1-2\sum_{t=1}^{T_1}\ind_{\bar{\xi}_{t,}}\\
&\geq \frac{3}{4} T_1 -2\sum_{t=1}^{T_1}\ind_{\bxi_{t,}}\,,
\end{align*}
where we used in the last inequality the fact that $D_1\leq \log_2(K)$ and that $\log_2(K) \leq T_1/4$ by definition of $T_1$ .
\end{proof}

\begin{lemma}\label{lem:conchern} 
Upon execution of algorithm \gradexplore with budget $\frac{T}{3}$ we have that,
\[
\PP\left( \sum_{t=1}^{T_1} \ind_{\bar{\xi}_t} \leq \frac{T_1}{8}\right) \leq  \exp\left(-\ccon  \frac{T \bD_{\min}^2}{\sigma^2}+\ccon'\log(K)\right) \,.
\]
where $\ccon = \frac{1}{576}$ and $\ccon' = 12$. 
\end{lemma}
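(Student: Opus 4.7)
The plan is to reproduce the template of Lemma~\ref{lem:chern} from the monotone case, replacing the three-arm concentration step by the six-arm step used inside \gradexplore. What I will actually bound is the upper tail $\PP(\sum_{t=1}^{T_1} \ind_{\bxi_t} \geq T_1/8)$, because this is the quantity which, via Lemma~\ref{lem:lb_on_Gt}, controls the probability that $G_{T_1+1} < T_1/2$ and therefore drives Theorem~\ref{thm:depcon_up}. I note for transparency that the inequality direction in the statement as printed appears to be inverted relative to what the subsequent argument consumes: since $p_t$ will be shown to be exponentially small, the event $\{\sum \ind_{\bxi_t} \leq T_1/8\}$ is overwhelmingly likely, and only its complement admits a stated exponential upper bound of the form on the right-hand side.

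First I would pin down the conditional miss probability $p_t := \PP(\bxi_t \mid \cF_{t-1})$. At iteration $t$ each of the six arms indexed by $\{l,l+1,m,m+1,r,r+1\}$ is pulled $T_2/12$ times, so the $\sigma^2$-sub-Gaussian Hoeffding bound gives $\PP(|\hmu_{j,t} - \mu_{v_t(j)}| > \Dmin \mid \cF_{t-1}) \leq 2 \exp(-T_2 \Dmin^2/(24\sigma^2))$, and a union bound over the six coordinates yields $p_t \leq p_0 := 12 \exp(-T_2\Dmin^2/(24\sigma^2))$. The standing assumption $\Dmin \geq \cconmin \sqrt{\sigma^2 \log K/T}$ with $\cconmin = 8064$ comfortably forces $p_0$ below $1/16$, strictly below the target frequency $1/8$, which is exactly what is needed for the variational identity in the next step to give a positive rate.

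Next I would run the standard exponential Markov / Chernoff argument verbatim from the monotone case. Writing $\phi_p(\lambda) := \log(1 - p + p e^\lambda)$ for the Bernoulli log-MGF, and using that $p \mapsto \phi_p(\lambda)$ is non-decreasing for $\lambda \geq 0$, iterated conditioning along $(\cF_t)_{t \leq T_1}$ gives $\EE[\exp(\lambda \sum_{t=1}^{T_1} \ind_{\bxi_t})] \leq \exp(T_1 \phi_{p_0}(\lambda))$. Combining Markov with the identity $\sup_{\lambda \geq 0}[\lambda q - \phi_p(\lambda)] = \kl(q,p)$ valid for $q \geq p$ yields $\PP(\sum_{t=1}^{T_1}\ind_{\bxi_t} \geq T_1/8) \leq \exp(-T_1 \kl(1/8, p_0))$.

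Finally I would apply \eqref{eq:fano} to obtain $\kl(1/8, p_0) \geq (1/8)\log(1/p_0) - \log 2 \geq T_2 \Dmin^2/(192\sigma^2) - \bigl((\log 12)/8 + \log 2\bigr)$, multiply by $T_1$, and use $T_1 T_2 \geq T/3 - T_1 \geq T/4$ under $T > 108 \log K$ and $T_1 \leq 7 \log K$, together with absorbing the additive remainder into a $12 \log K$ slack. The inequality $T_1 \kl(1/8, p_0) \geq T \Dmin^2/(576 \sigma^2) - 12 \log K$ drops out, giving exactly $\ccon = 1/576$ and $\ccon' = 12$. Nothing conceptually new beyond Lemma~\ref{lem:chern} is required; the main obstacle is purely bookkeeping — the per-arm sampling budget has shrunk from $T_2$ to $T_2/12$, which propagates into the factor $1/24$ in the exponent of $p_0$, then to $1/192$ inside $\kl$, and finally yields the weaker constant $\ccon = 1/576$ to be compared with $\cmon = 1/48$ in the monotone case.
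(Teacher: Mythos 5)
Your argument is correct and is exactly the paper's: the paper's entire proof of this lemma is the single sentence ``the proof follows as in the proof of Lemma~\ref{lem:chern}, with altered constants,'' and you carry out precisely that adaptation (Hoeffding plus a union bound over the six arms pulled $T_2/12$ times each, the iterated-conditioning Chernoff bound via the Bernoulli log-MGF $\phi_{p}(\lambda)$, the identity $\sup_{\lambda\geq 0}[\lambda q-\phi_p(\lambda)]=\kl(q,p)$, and \eqref{eq:fano}), including the correct reading of the misprinted ``$\leq T_1/8$'' as ``$\geq T_1/8$'', which is indeed the version consumed by Lemmas~\ref{lem:gradex} and~\ref{lem:armsbellow}. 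The only blemish is the last bookkeeping step: $T_1T_2\geq T/4$ combined with the prefactor $1/192$ yields $\ccon=1/768$ rather than $1/576$ (one would need $T_1T_2\geq T/3$, which is impossible since $T_2=\lfloor T/(3T_1)\rfloor$), but as $\ccon$ is a universal constant that the paper itself never actually computes, this changes nothing of substance.
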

\begin{proof}
 The proof follows as in the proof of Lemma \ref{lem:chern}, with altered constants.

\end{proof}

\begin{lemma}\label{lem:gradex}
Under the assumption $\exists k: \mu_k > \tau$, upon execution of algorithm \gradexplore with output $\hk$ we have that $\mu_{\hk} \geq \tau$ with probability greater than $$1-\exp\left(-\ccon  \frac{T \bD_{\min}^2}{\sigma^2}+\ccon'\log(K)\right)\;.$$
\end{lemma}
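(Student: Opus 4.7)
\medskip

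\noindent\textbf{Proof plan for Lemma \ref{lem:gradex}.} The idea is to combine the two ingredients already in place: the deterministic lower bound on $G_{T_1+1}$ from Lemma~\ref{lem:lb_on_Gt}, and the concentration of $\sum_t \ind_{\bxi_t}$ from Lemma~\ref{lem:conchern}, and then use a simple structural fact about concave sequences to deduce that the index median of $S_{T_1+1}$ is an above-threshold arm.

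Concretely, define the favorable event $E:=\{\sum_{t=1}^{T_1}\ind_{\bxi_t}\leq T_1/8\}$. Lemma~\ref{lem:conchern} yields $\PP(E^c)\leq \exp(-\ccon T\bD_{\min}^2/\sigma^2+\ccon'\log K)$, which is exactly the tail budget we need. It therefore suffices to show that on $E$ the output satisfies $\mu_{\hk}\geq\tau$.

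On $E$, Lemma~\ref{lem:lb_on_Gt} gives $G_{T_1+1}\geq \tfrac{3}{4}T_1 - 2\cdot \tfrac{T_1}{8} = \tfrac{T_1}{2}$. I also need to control the size of $S_{T_1+1}$: inspecting \gradexplore, an arm $v_t(k)$ is appended to $S$ only when $\hmu_{k,t}>\tau$; under $\xi_t$ one has $|\hmu_{k,t}-\mu_{v_t(k)}|\leq \Dmin$, and any arm below threshold satisfies $\mu\leq \tau-\Dmin$, so $\hmu_{k,t}>\tau$ forces $\mu_{v_t(k)}>\tau$. Hence every arm added to $S$ at a step where $\xi_t$ holds is a good arm, which gives the inequality $|S_{T_1+1}|-G_{T_1+1}\leq \sum_{t=1}^{T_1}\ind_{\bxi_t}\leq T_1/8$ on $E$. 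Combining,
\[
G_{T_1+1}\;\geq\;|S_{T_1+1}|-\tfrac{T_1}{8}\;\geq\;|S_{T_1+1}|-\tfrac{G_{T_1+1}}{4},
\]
so $G_{T_1+1}>|S_{T_1+1}|/2$ (using $G_{T_1+1}\geq T_1/2>T_1/8$). In particular $|S_{T_1+1}|\geq G_{T_1+1}\geq T_1/2>T_1/4$, so \CTB does not take the abort branch and actually computes $\hk=\median(S_{T_1+1})$.

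To finish, I will invoke a structural fact coming from the concave assumption: the set $\{k\in[K]:\mu_k>\tau\}$ is a contiguous index-interval $[k_L+1,k_R-1]$. Since strictly more than half of the elements of $S_{T_1+1}$ lie in this interval (that is exactly what $G_{T_1+1}>|S_{T_1+1}|/2$ says), sorting $S_{T_1+1}$ by index places the middle position inside $[k_L+1,k_R-1]$, so $\hk\in[k_L+1,k_R-1]$ and $\mu_{\hk}>\tau$. The only delicate step I expect is this bookkeeping of ``good vs.\ bad additions to $S$'', i.e.\ verifying that the margin $G_{T_1+1}-|S_{T_1+1}|/2$ is strictly positive rather than merely nonnegative; everything else is a straightforward chaining of Lemmas~\ref{lem:lb_on_Gt} and~\ref{lem:conchern} with the concavity-induced interval structure.
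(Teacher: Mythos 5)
Your proof follows essentially the same route as the paper's: combine Lemma~\ref{lem:lb_on_Gt} with Lemma~\ref{lem:conchern} to get $G_{T_1+1}\geq T_1/2$ on the favorable event, then use the fact that the above-threshold arms form a contiguous index interval so that the median of $S$ is above threshold. The only place you differ is in comparing $G_{T_1+1}$ to $|S_{T_1+1}|$: the paper simply uses $|S_{T_1+1}|\leq T_1$ to obtain the non-strict bound $G_{T_1+1}\geq |S_{T_1+1}|/2$, whereas your bookkeeping of bad additions via $\sum_{t}\ind_{\bxi_t}\leq T_1/8$ yields the strict inequality $G_{T_1+1}>|S_{T_1+1}|/2$ --- a slightly more careful treatment of the boundary case in which exactly half of $S$ is good and the median convention could otherwise matter.
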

\begin{proof} 
By combination of Lemmas \ref{lem:lb_on_Gt} and \ref{lem:conchern} we have that $G_{T_1+1} \geq \frac{1}{2}T_1$ with probability greater than $1 -\exp\left(-\ccon \frac{T \bD_{\min}^2}{\sigma^2} \ccon'\log(K)\right)$. As $|S_{T_1+1}| \leq T_1$ and as the arms $G_{T_1 + 1}$ form a segment (they are all above threshold) by taking the median of $S_{T_1 + 1}$ under the circumstance $G_{T_1+1} \geq \frac{1}{2} T_1$ we have that the output of \gradexplore $\hk$ is such that $\mu_{\hk} > 
\tau$. This then gives the result.
\end{proof}

With the following lemma we deal with the special case where all arms are below threshold before finally completing the proof of Theorem \ref{thm:depcon_up}.

\begin{lemma}\label{lem:armsbellow}  
Under the assumption $\forall k \in [K]: \mu_k < \tau$, upon execution of algorithm \gradexplore with output $\hk$ we have that $\forall k\in [K], \hQ_k = -1$  with probability greater than 
$$1-\exp\left(-\ccon  \frac{T \bD_{\min}^2}{\sigma^2}+\ccon'\log(K)\right)\;.$$
where $\ccon = \frac{1}{576}$ and $\ccon' = 12$. 
\end{lemma}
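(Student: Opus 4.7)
The plan is to show that when every arm is below the threshold the list $S_{T_1}$ produced by \gradexplore stays short with high probability, so \CTB triggers the first branch of Algorithm~\ref{alg:CTB} and returns $\hQ = \{-1\}^K$, which is the correct classification.

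First I would translate the hypothesis into a clean gap condition: since $\mu_k < \tau$ for every $k$ and $\Dmin = \min_k |\mu_k - \tau|$, we have $\mu_k \leq \tau - \Dmin$ uniformly in $k$. Next I would observe that on the favourable event $\xi_t$ all six estimates used at time $t$ satisfy $|\hmu_{j,t} - \mu_{v_t(j)}| \leq \Dmin$, whence $\hmu_{j,t} \leq \mu_{v_t(j)} + \Dmin \leq \tau$ for every $j\in\{l,m,r\}$. Since the \emph{only} mechanism by which an arm is appended to the running list in \gradexplore is the event $\exists j\in\{l,m,r\}:\ \hmu_{j,t} > \tau$, no arm is added at step $t$ whenever $\xi_t$ holds. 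Summing over $t$ this yields the deterministic bound
\[
|S_{T_1}| \leq \sum_{t=1}^{T_1} \ind_{\bar\xi_t}.
\]

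Then I would invoke the concentration bound obtained for the bad event count in Lemma~\ref{lem:conchern} (used with the correct direction, i.e.\ controlling $\PP(\sum_t \ind_{\bar\xi_t} > T_1/8)$, exactly as in the proof of Lemma~\ref{lem:gradex}). This gives
\[
\PP\!\left(\sum_{t=1}^{T_1} \ind_{\bar\xi_t} > \tfrac{T_1}{8}\right) \leq \exp\!\left(-\ccon \frac{T\Dmin^2}{\sigma^2} + \ccon' \log(K)\right).
\]
On the complementary event we have $|S_{T_1}| \leq T_1/8 \leq T_1/4$, so the test $|S_{T_1}| \leq T_1/4$ in \CTB fires and the algorithm returns $\hQ = \{-1\}^K$, matching the true labels $Q_k = -1$ for all $k$.

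There is essentially no serious obstacle here: unlike Lemma~\ref{lem:gradex}, we do not need to analyse the walk on the binary tree at all, only the fact that an arm cannot be spuriously added to $S$ on a concentration-good step. The only subtlety is bookkeeping: making sure the constants $\ccon,\ccon'$ are inherited from Lemma~\ref{lem:conchern} unchanged, and noting that the deterministic implication $|S_{T_1}| \leq T_1/8 \Rightarrow |S_{T_1}| \leq T_1/4$ is what connects the concentration event directly to the decision rule of Algorithm~\ref{alg:CTB}.
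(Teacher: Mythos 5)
Your proposal is correct and follows essentially the same route as the paper: on $\xi_t$ no arm can be appended to $S$ (since all true means are at least $\Dmin$ below $\tau$), giving the deterministic bound $|S_{T_1}| \leq \sum_{t=1}^{T_1}\ind_{\bar\xi_t}$, after which Lemma~\ref{lem:conchern} (read in the intended direction, bounding $\PP(\sum_t \ind_{\bar\xi_t} > T_1/8)$) finishes the argument. Your version merely spells out two details the paper leaves implicit --- the inequality $\hmu_{j,t}\leq \mu_{v_t(j)}+\Dmin\leq\tau$ and the link to the $|S_{T_1}|\leq T_1/4$ test in Algorithm~\ref{alg:CTB} --- which is fine.
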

\begin{proof}
Under the assumption $\forall k \in [K]: \mu_k < \tau$, for all $t < T_1$, we have that under the event $\xi_t$, $S_{t+1} = S_t$, see \eqref{eq:ximean}. Therefore the following holds, 

\[|S_{T_1}| \leq \sum_{t=1}^{T_1} \ind_{\bar{\xi}_t}\;.\]
The proof now follows from direct application of Lemma \ref{lem:conchern}. 
\end{proof} 

We are now ready to prove Theorem \ref{thm:depcon_up}.\medskip
\newline
\begin{proof}[Proof of Theorem \ref{thm:depcon_up}]
In the case where $\mu_k < \tau,\; \forall k \in [K]$ Lemma \ref{lem:armsbellow} immediately gives the result. Therefore we consider the case in which $\exists k \in [K] : \mu_k > \tau$. Under this assumption the algorithm \gradexplore will return an arm $\hk : \mu_{\hk} > \tau$ with probability greater than 

$$1-\exp\left(-\frac{1}{576}  \frac{T \bD_{\min}^2}{\sigma^2}+12\log(K)\right)\;,$$

see Lemma \ref{lem:gradex}. In this case we have the sets of arms $[1,\hk]$, $[\hk,K]$ which satisfy the assumption described in Remark \ref{rem:relaxing_monotone}. Therefore via Theorem \ref{thm:depmon_up} and a union bound we have that with probability greater than 
\[1 - 2 \exp\!\left(-\frac{1}{48} \frac{T\Delta^2}{\sigma^2}+12 \log(K) \right)\]
we will correctly classify arms on both these sets. With an additional union bound we achieve the result.  
\end{proof}
\section{Experiments}
\label{app:experiments}

We conduct some preliminary experiments to test the performance of both \explore and \CTB to illustrate our theoretical understanding. As a bench mark we will use both a \uniform algorithm and also a naive binary search - that is without back tracking, that we will term \naive, for an exact description of both see Appendix. Note that \naive essentially behaves as a uniform sampling algorithm on a bandit problem with $\log(K)$ arms. As our theoretical bounds are likely far to loose in terms of constants we also include a parameter tuned version of the \explore where we tune the constants in the definition of $T_1$
and $T_2$, see Equation~\eqref{eq:t1t2}.

We would expect the \naive algorithm to have an upper bound of the order $\exp\left(-\frac{T\Dm^2}{\log(K)}\right)$. This is sub-optimal compared to \explore which removes the $\log(K)$, see Theorem~\ref{thm:depmon_up}. However, \explore must divide it's budget across several arms at each round, while \naive algorithm samples only one. This may out weigh the benefit of backtracking when $K$ is not very large. 

In our experiments we consider two thresholding bandit problems. In Setting~1 the gap of one arm is set to $\Delta$, with the remaining gaps very large - i.e. 100,  In Setting~2 all gaps are set to $\Delta$, for the \CTB we modify this to a concave setting where all arms are Delta apart. The former problem should more favour \explore as it can quickly traverse the binary tree and expend most of it's budget on the leaf in question. 

In Figure \ref{fig:E1} we consider consider the expected error in Setting 1 as a function of the gap $\Delta$ and as a function of the number of arms $K$.The effect of varying $\Delta$ follows our intuition. Firstly all algorithms show an increased performance for greater $\Delta$, this should be completely expected. Secondly, in Setting~1 the \explore algorithm decrease in probability of error faster than Naive and much faster than \uniform. This is also unsurprising as in this setting the \uniform, and to a lesser extent \naive, algorithms are forced to waste an unnecessary amount of their budget on arms far from threshold. In the case of varied K, on the right, \explore appears to outperform \naive, showing no obvious dependency on $K$ past a certain point, however there is considerable noise.

\begin{figure}[H]\label{fig:diffdelt}
\includegraphics[width=.45\linewidth]{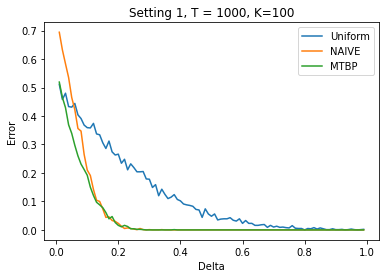}
\includegraphics[width=.45\linewidth]{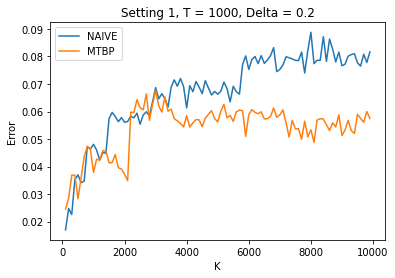}\\
\caption{On the right: expected error as a function of the number of arms $K \in \left(100\times i\right)_{i\in[100]}$ with $T = 1000$ and $\Delta = 0.2$ in Setting~1 averaged over 10000 Monte Carlo simulations. On the left: expected error as a function of the gap  $\Delta \in \left(0.01\times i\right)_{i\in[100]}$ with $T = 1000$ and $K = 100$ in Setting~1 averaged over 1000 Monte Carlo simulations.}
\label{fig:E1}
\end{figure}

In figure \ref{fig:E2} we consider consider the expected error in Setting 2 as a function of the gap $\Delta$ and as a function of the number of arms $K$. In both cases Naive out performs both \explore and it's tuned version, vastly so for larger $K$. It would appear that here dividing our budget cancels out any gains one receives from reducing dependency on $\log(K)$. It is unfortunate that we were unable to find heuristic evidence of a lack of dependency on $\log K$, although this was perhaps expected. Based on our results, see Theorem~\ref{thm:depmon_up}, to remove such a dependency one would need $T\Delta^2 >> \log(K)$. This would lead to extremely low probabilities of error which are near impossible to detect accurately without huge numbers of Monte Carlo simulations, unfortunately beyond the scope of this paper.

\begin{figure}[!h]
\includegraphics[width=.45\linewidth]{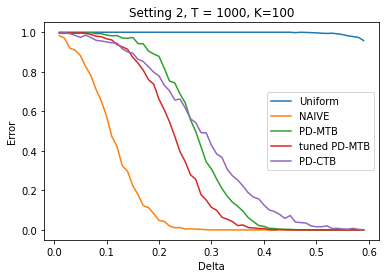}
\includegraphics[width=.45\linewidth]{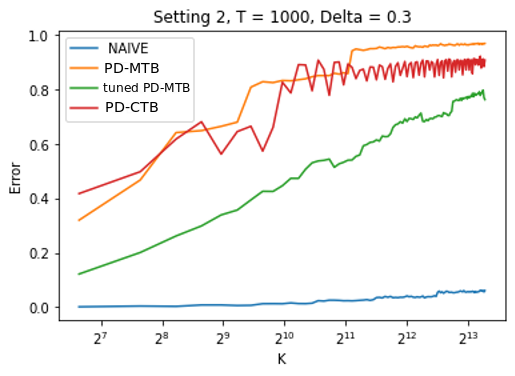}\\
\caption{On the right: Expected error as a function of the number of arms $K \in \left(100\times i\right)_{i\in[100]}$ with $T= 1000$, $\Delta = 0.3$, in Setting~2, plotted on a log scale averaged over 10000 Monte Carlo simulations. On the left: expected error as a function of the gap  $\Delta \in \left(0.01\times i\right)_{i\in[60]}$ with $K = 100$, $T=1000$, in Setting~2, averaged over 1000 Monte Carlo simulations}\label{fig:E2}
\end{figure}

\begin{algorithm}[H]
\caption{\naive}
\label{alg:naive} 
\begin{algorithmic}
\STATE {\bfseries Initialization:} $v_1=\root$
\FOR{$t=1:T_1$}
\STATE sample $\lfloor\frac{T}{\log(K)}\rfloor$ times each arm in $v_t(m)$\\
\IF{$\hat{\mu}_{m,t} \leq \tau $}
\STATE$v_{t+1} = R(v_t)$
\ELSE
\STATE $v_{t+1} = L(v_t)$
\ENDIF
\ENDFOR
\STATE Set $\hk = v_{T_1 + 1}(r)$\\
\RETURN $(\hk, \hQ) :\quad  \hQ_k=  2\ind_{\{k \geq \hk \}}-1$
\end{algorithmic}
\end{algorithm}

\begin{algorithm}[H]
\caption{Uniform}\label{alg:unif} 
\begin{algorithmic}
\FOR{$k=1:K$}
    \STATE Sample arm $k$ a total of 
    \STATE $\lfloor\frac{T}{K}\rfloor$ times.\\
    \STATE Compute $\hmu_k$ the sample mean of arm $k$. 
\ENDFOR
\RETURN\vspace{-5mm} \begin{equation*}\hQ :\quad    \hQ_k=
    \begin{cases}
      -1 & \text{if}\ \hmu_k < \tau \\
      1 & \text{if}\ \hmu_k \geq \tau
    \end{cases}
  \end{equation*}
\end{algorithmic}

\end{algorithm}

\end{document}